\documentclass[letterpaper]{article} 
\usepackage[preprint]{paperstyle}  
\usepackage[hyphens]{url}  
\usepackage{graphicx} 
\usepackage{natbib}  
\usepackage{caption} 
\usepackage{amsmath}
\usepackage{amssymb}
\usepackage{amsthm}
\newtheorem{proposition}{Proposition}
\newtheorem{theorem}{Theorem}
\newtheorem{definition}{Definition}
\usepackage{booktabs}
\usepackage{algorithm}
\usepackage{algorithmic}
\usepackage{placeins}

\newcommand{\cmark}{\checkmark}
\newcommand{\xmark}{$\times$}

\title{Beyond Aggregate Risk:\\
Role-Stratified Conformal Risk Control for LLM Tool Calls}

\author{%
Md Ashikur Rahman,
Md Arifur Rahman,
Niamul Hassan Samin,\\
Khandaker Rifah Tasnia,
Md Hasibul Amin,
Sifat Rahman Ahona,\\
Juena Ahmed Noshin
}
\affiliations{%
}

\begin{document}
\maketitle

\begin{abstract}
Language-model agents act through structured tool calls whose arguments carry
very different risks: untrusted content may legitimately shape an email body but should
never set a recipient, account, command, or credential. Existing conformal risk
control methods certify a tool call as a whole, so a failure in one rare
high-risk field can be averaged away by the many benign arguments around it,
leaving the argument that causes harm uncertified. We introduce role-stratified per-field conformal risk
control, a calibration layer that wraps any per-field detector and assigns a
separate threshold and risk budget to each semantic argument role. We show that
aggregate certification pays a price of coarseness, tightening a rare role's
effective budget in proportion to how often that role appears, whereas
role-stratified calibration certifies each sufficiently sampled role directly
with a finite-sample guarantee and pools the rarest roles. Across AgentDojo and
InjecAgent with six language models, our method achieves the most consistent
role-specific budget compliance among the methods we evaluate under model and
attack transfer, detector noise, gradual drift, unseen tool suites, and adaptive
attacks, providing formal per-role guarantees under exchangeability or after
recalibration. These results
suggest that structured tool calls should be certified at the semantic-role
level, not the whole action.
\end{abstract}

\section{Introduction}

Consider an agent asked to pay an invoice. It reads the amount from an email,
but hidden text redirects the payment to a new account. Payment calls
specify an account and amount, just as a \texttt{send\_email} call specifies a
recipient, body, and attachments. Untrusted content may legitimately influence an email
body but should not determine a recipient, payment account, or credential. When
text from a webpage, email, or retrieved document changes such a high-risk field,
the agent may act against the user's intent~\citep{greshake2023notwhat}.

Existing defenses follow two main approaches. Security-enforcement systems track
high-risk fields and apply hard allow-or-deny
rules~\citep{camel2025,fides2025,pact2026}. These systems can provide strong
semantic guarantees, including noninterference, but do not offer tunable risk
budgets or statistical bounds on residual violations. Statistical calibration
methods, including conformal risk control (CRC), provide tunable finite-sample
guarantees but usually control the action as a
whole~\citep{angelopoulos2024conformal,cora2026}.

Action-level control can hide failures in rare high-risk fields: averaging
across arguments may keep aggregate loss below budget even when a recipient,
account, or credential field exceeds its limit, so the unit being certified
differs from the unit that can cause harm.

We address this mismatch with role-stratified per-field conformal risk control.
Each argument is assigned a semantic role, such as \texttt{target},
\texttt{credential}, \texttt{command}, or \texttt{content}, with its own
threshold and risk budget. This applies class-conditional, or Mondrian,
conformal control at the role
level~\citep{vovk2003mondrian,ding2023classconditional}. Stratification
prevents aggregate dilution, while conformal calibration provides finite-sample
validity and supports recalibration under changing conditions. The method can
wrap any per-field detector and improves directly as the detector improves.

Our central thesis is that the appropriate calibration stratum is the semantic
role of each argument. Our contributions organize around that claim.
\begin{itemize}
\item \emph{Why role-level control is necessary.}
  Aggregate certification dilutes rare-role failures: a controller that observes
  only aggregate loss inflates a role-$r$ violation rate by $1/p_r$
  (Proposition~\ref{prop:gap}) and can certify the role only by shrinking its
  effective budget to $\alpha\,p_r$ (Proposition~\ref{prop:lb}).
  Theorem~\ref{thm:frontier} orders observation channels by this price of
  coarseness, specializing Blackwell's comparison of
  experiments~\citep{blackwell1953equivalent} to conformal risk certification.
\item \emph{How to obtain it.}
  Role-stratified per-field CRC assigns each role its own budget, a
  certifiability floor of $1/(n_r{+}1)$, and a simultaneous high-probability
  certificate over the final calibration strata, with pooling for rare roles
  (Theorem~\ref{thm:simul}).
  Label-conditional calibration further yields a prevalence-invariant
  attack-conditional guarantee on
  $V_{\mathrm{att}}(r)$~\citep{podkopaev2021labelshift}; in the zero-budget
  limit the method recovers detector-relative noninterference
  (Proposition~\ref{prop:ifc}).
\item \emph{How it behaves in practice.}
  Across six models and two benchmarks, the empirical utility gap follows the
  predicted price of coarseness. Role-stratified per-field CRC remains
  compliant under transfer, unseen suites, detector noise, gradual drift, and
  adaptive attacks. A $2{\times}2$ ablation shows stratification provides the
  main robustness gain, while conformal recalibration restores finite-sample
  validity under the new condition.
\end{itemize}

We use \emph{certificate} only under exchangeability or after recalibration.
Under distribution shift with frozen thresholds, we report empirical budget
compliance rather
than a conformal certificate.

\begin{figure*}[t]
  \centering
  \includegraphics[width=\textwidth]{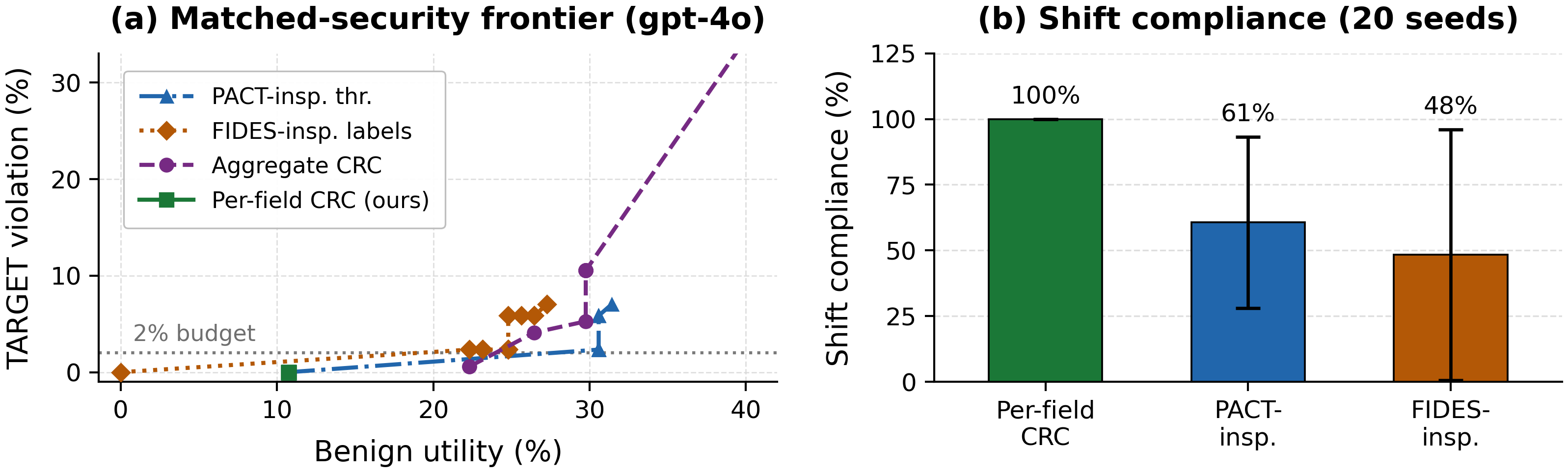}
  \caption{(a)~Utility versus \texttt{target} violation on
    GPT-4o at a $2\%$ budget using the deployable score over 20 seeds.
    (b)~Empirical compliance across eight shifted conditions:
    per-field CRC $100\%\pm0\%$, PACT-inspired threshold $61\%\pm33\%$, and
    FIDES-inspired labels $48\%\pm48\%$.
    Results in panel (b) report empirical compliance under shift.}
  \label{fig:headline}
\end{figure*}

\section{Related Work}

\paragraph{Security enforcement for tool-using agents.}
One line of work protects tool calls through deterministic information-flow
control (IFC). CaMeL isolates high-risk calls using a dual-LLM, capability-based
design~\citep{camel2025}. FIDES tracks integrity and confidentiality labels for
individual values~\citep{fides2025}, while PACT traces argument-level provenance
using the \texttt{target}, \texttt{command}, \texttt{credential}, and
\texttt{content} roles~\citep{pact2026}, which we adopt and extend with
\texttt{selector} and \texttt{control} for AgentDojo. ARM and AgentVisor study
causal provenance and semantic
isolation~\citep{arm2026,agentvisor2026}.

These systems build on classical information-flow control and
noninterference~\citep{denning1976lattice,volpano1996sound}. They provide strong
semantic guarantees, but generally rely on fixed allow-or-deny decisions rather
than tunable risk budgets or statistical bounds on residual violations.

\paragraph{Statistical calibration and conformal risk control.}
A second line of work provides tunable, distribution-free guarantees through
calibration. CORA applies conformal risk control to mobile GUI agents but
defines risk over the action as a whole~\citep{cora2026}. Other methods control
hallucination or response-level error through abstention or calibrated
refusal~\citep{conformalabstention2024,conformalactuator2025}. These
approaches also operate at the response or action level rather than the
semantic-role level.

Our method builds on conformal risk
control~\citep{angelopoulos2024conformal}, split conformal
prediction~\citep{papadopoulos2002inductive,vovk2005algorithmic,lei2018distribution,angelopoulos2023gentle},
Learn-then-Test~\citep{angelopoulos2025ltt}, and risk-controlling prediction
sets~\citep{bates2021rcps}. Its structure follows class-conditional,
group-conditional, and Mondrian conformal
methods~\citep{vovk2003mondrian,ding2023classconditional,bairaktari2025kandinsky}.
The key difference is the unit of control: we calibrate risk separately for each
semantic argument role.

\paragraph{Concurrent conformal defenses and calibration units.}
Concurrent methods calibrate entire trajectories or streams of agent
runs~\citep{toolchaincrc2026,hultberg2026anytime,khosravi2026selective},
whereas we certify semantic argument roles. These temporal and semantic
calibration units are complementary.

\paragraph{Prompt injection, benchmarks, and positioning.}
These research directions meet in indirect prompt injection, a practical threat
to LLM-integrated systems~\citep{greshake2023notwhat}. We evaluate on AgentDojo
and InjecAgent, two benchmarks for prompt-injection attacks against tool-using
agents~\citep{debenedetti2024agentdojo,zhan2024injecagent}.

Training- and prompt-level defenses, including StruQ, SecAlign, Spotlighting,
the Instruction Hierarchy, and Jatmo, aim to prevent or weaken attacks before
calibration and are therefore
complementary to our
method~\citep{struq2025,secalign2025,spotlighting2024,instructionhierarchy2024,jatmo2024}.
Broader studies classify and compare prompt-injection attacks and
defenses~\citep{ipisok2025,liu2024formalizing}.

Our contribution is a calibration layer that wraps any per-field detector with
role-specific risk budgets, orthogonal to existing defenses rather than a new
attack or detector. We compare controlled PACT-, FIDES-, and CaMeL-inspired
proxies under a common detector to isolate enforcement granularity; these are
not full reproductions of the original systems.

\section{Problem Setup and Threat Model}

\paragraph{Structured actions and per-field violations.}
A tool call is a structured action
\begin{equation}
A = (\textit{op},\, x_1, \dots, x_k),
\end{equation}
where $\textit{op}$ is an operation, such as \texttt{send\_email}, and each
$x_i$ is a named argument. Every argument is assigned a semantic role
$r(i) \in \mathcal{R}$, such as \texttt{target}, \texttt{credential},
\texttt{command}, \texttt{selector}, \texttt{control}, or \texttt{content}.

These roles carry different risks. A \texttt{target} or \texttt{credential}
determines where an action goes or which authority it uses, whereas
\texttt{content} is often expected to reflect untrusted input. Fields are the
enforcement and sampling units. Semantic roles are the calibration and
certification strata. A field is violated when untrusted input changes its
value against the user's intent. It is allowed when the controller permits that
value to be emitted. For each role $r$, we define
\begin{equation}
\label{eq:vdef}
V(r) = \Pr\!\big[\,\text{violated}\wedge\text{allowed}\ \big|\ \text{role}=r\,\big],
\end{equation}
the probability that an emitted role-$r$ field is both violated and allowed.
We certify violation rates over role-specific field populations, not the safety
of an individual tool call. We require $V(r) \le \alpha(r)$, with tighter
budgets for higher-risk roles.

\paragraph{What $V(r)$ measures.}
The denominator of $V(r)$ includes all emitted role-$r$ fields, both benign and
attacked, so it is the quantity optimized by the calibration loss and reported
throughout. Because clean fields are included, $V(r)$ depends on operational
attack prevalence and is best read as field-stream integrity under a stated
clean-attack mixture, not a prevalence-invariant leakage rate. To isolate
attack-conditional behavior, we also report
\begin{equation}
\label{eq:vattdef}
V_{\mathrm{att}}(r) = \Pr\!\big[\,\text{violated}\wedge\text{allowed}\ \big|\
  \text{role}=r,\ \text{attacked}\,\big],
\end{equation}
which is prevalence-independent. By construction, an unattacked field cannot
receive an attack-induced violation label, so its clean-conditional rate is
zero and $V(r) = \pi(r)\,V_{\mathrm{att}}(r)$ for field-level attack prevalence
$\pi(r)$.
In our benchmarks the \texttt{target} stream is attack-heavy
($\pi \approx 85\%$-$94\%$), so the two are similar. They diverge in a
low-prevalence deployment (Supplement~A). Both differ from
trace-level attack success rate (ASR), the fraction of attacked episodes in
which any high-risk field is leaked, which we report as a separate security
metric, not the certified object.

\paragraph{Attacker.}
The attacker controls untrusted text from webpages, emails, or retrieved
documents and attempts to influence one or more high-risk arguments. In the
strongest setting, the attacker selects the least detectable successful prompt
from several candidates and uses attack families unseen during calibration. The
attacker does not control the calibration data or model weights. We also exclude
gradient-based prompt optimization~\citep{zou2023universal}.

\paragraph{Per-argument score.}
Each argument $x_i$ receives a nonconformity score $s_i$ that estimates its
influence from untrusted input. Our deployable score is annotation-free:
\begin{equation}
\begin{aligned}
s_i &= \mathrm{sim}(x_i, \text{untrusted context})\\
    &\quad - \mathrm{sim}(x_i, \text{trusted user prompt}).
\end{aligned}
\end{equation}
We shift and rescale the similarity difference to $[0,1]$. Larger scores
indicate stronger influence from untrusted context, so a field is allowed when
$s_i\le\tau$. The score does not use attacker labels. For analysis, we also
evaluate an attacker-literal overlap diagnostic that measures lexical overlap
between the emitted field and the injection goal. Because it reads the attacker
literal used to define the violation label, its ROC-AUC of $0.93$ partly reflects
access to evaluation-specific attacker text, so we treat it as a label-coupled
oracle-aided upper bound rather than the deployable default. This diagnostic is
distinct from the oracle-aided counterfactual clean-vs-injected overlap
($0.76$ ROC-AUC) evaluated only in the non-verbatim stress test (Supplement~A).
Both scores use the same calibration and decision rule. Only the score
computation changes, as detailed in Supplement~A.

\section{The Aggregate-Budget Failure}

Aggregate control bounds expected harm by averaging risk across all fields in a
tool call. This is misleading when structured calls mix fields with very
different consequences: if violations concentrate in a rare high-risk role, many
benign fields keep the average below budget even when that role exceeds its
limit, and raising the aggregate budget only weakens the overall constraint
without protecting the risky field.

\begin{proposition}[Concentration gap]
\label{prop:gap}
Let $p_r > 0$ be the fraction of fields with role $r$, where
$\sum_r p_r = 1$. Define aggregate field-averaged harm as
$\overline{V} = \sum_r p_r V(r)$. If aggregate control guarantees
$\overline{V} \le \alpha_{\mathrm{agg}}$, then
$V(r) \le \alpha_{\mathrm{agg}}/p_r$ for every role $r$, and this bound is
tight.
\end{proposition}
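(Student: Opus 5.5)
The bound itself is a one-line nonnegativity argument, and tightness follows from an explicit construction that puts all the harm on role $r$. The only subtle point is making sure that construction is admissible.

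\emph{Upper bound.} Every $V(r')$ is a probability, so each $V(r') \ge 0$. Dropping every term except the one for role $r$ in the aggregate sum gives
\[
p_r V(r) \;\le\; \sum_{r'} p_{r'} V(r') \;=\; \overline{V} \;\le\; \alpha_{\mathrm{agg}} .
\]
Since $p_r > 0$, dividing by $p_r$ yields $V(r) \le \alpha_{\mathrm{agg}}/p_r$. No assumption on the controller is used beyond the guarantee $\overline{V}\le\alpha_{\mathrm{agg}}$. The bound therefore holds for any aggregate-certified procedure, whether conformal or not.

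\emph{Tightness.} Fix a role $r$ and any $\alpha_{\mathrm{agg}} \le p_r$. Consider a field population and controller with
\[
V(r) = \alpha_{\mathrm{agg}}/p_r, \qquad V(r') = 0 \quad \text{for all } r' \neq r .
\]
Concretely, only role-$r$ fields are ever violated, the controller allows a fraction of them such that the violated-and-allowed mass within role $r$ is exactly $\alpha_{\mathrm{agg}}/p_r$, and it blocks, or never sees, violations in every other role. Then
\[
\overline{V} = p_r \cdot \alpha_{\mathrm{agg}}/p_r = \alpha_{\mathrm{agg}},
\]
so the aggregate guarantee holds with equality while role $r$ attains the bound. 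Hence the bound cannot be improved uniformly over distributions.

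\emph{Admissibility of the construction.} The main point to check is that the construction is legitimate. This has two parts.

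First, the claimed value must be a valid probability. When $\alpha_{\mathrm{agg}} > p_r$, the bound exceeds $1$, and the meaningful statement is $V(r)\le\min\{1,\alpha_{\mathrm{agg}}/p_r\}$. In that regime, tightness is realized by $V(r)=1$, with any leftover budget absorbed by other roles or left unused. I would state tightness for the regime $\alpha_{\mathrm{agg}}\le p_r$ and note the trivial cap otherwise.

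Second, $V(r)=\alpha_{\mathrm{agg}}/p_r$ must be achievable. This requires the attack prevalence $\pi(r)$ to be at least that value, which is consistent with $V(r)=\pi(r)V_{\mathrm{att}}(r)$. I would take $\pi(r)=1$, or any sufficiently large prevalence, in the witnessing distribution. A randomized allow rule, or an appropriate threshold on a continuous score, then hits the required allowed fraction exactly.
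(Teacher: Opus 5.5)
Your proposal is correct and matches the paper's proof: you drop the nonnegative terms to get $p_r V(r)\le\overline{V}\le\alpha_{\mathrm{agg}}$, then show tightness by placing all violations on role $r$. Your remark that tightness holds literally only when $\alpha_{\mathrm{agg}}\le p_r$ (otherwise the bound reads $\min\{1,\alpha_{\mathrm{agg}}/p_r\}$, and the attack prevalence must be large enough to realize it) is a valid refinement that the paper's one-line argument leaves implicit.
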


\begin{proof}
Because all terms are non-negative,
$p_r V(r) \le \overline{V} \le \alpha_{\mathrm{agg}}$,
so $V(r) \le \alpha_{\mathrm{agg}}/p_r$. Equality is possible when all
violations occur in role $r$.
\end{proof}

Thus, aggregate protection weakens in inverse proportion to role prevalence.
In the full clean-attack evaluation population used to compute $\overline{V}$,
\texttt{target} fields have prevalence $p_{\texttt{target}}\approx 0.12$ (pooled
across the six models, $0.06$-$0.21$ per model), consistent with the denominator
of $V(r)$ in Equation~\eqref{eq:vdef}, which likewise includes all role-$r$
fields. Thus, even before distribution shift, a $1\%$ aggregate budget can
permit roughly $1\%/0.12 \approx 8\%$ \texttt{target} violations.

The next result shows that aggregate-only certification must reduce its effective
budget by the same factor $p_r$. Two distributions can have identical scores and
aggregate losses while placing every violation in different roles. An
aggregate-only controller cannot distinguish them.

\begin{definition}[Aggregate-measurable controller]
\label{def:agg-measurable}
Let $\Phi(A)$ denote the score information visible to the controller for action
$A$, such as action-level or field-level detector scores. Define the action's
average violation loss as
\begin{equation*}
L_{\mathrm{agg}}(A)
  = \frac{1}{|A|}\sum_{i\in A}
    \mathbf{1}\{\text{field $i$ is violated and allowed}\}.
\end{equation*}
A policy is \emph{aggregate-measurable} if it uses calibration data only through
$\{(\Phi(A_j),L_{\mathrm{agg}}(A_j))\}_j$ and makes decisions using only
$\Phi(A)$. Aggregate control and single-threshold defenses belong to this class.
Role-stratified CRC does not, because it also observes the role of the violated
field.
\end{definition}

\begin{proposition}[Aggregate control cannot certify a rare role]
\label{prop:lb}
Fix a role $r$ whose field-level prevalence $p_r=a/K\in(0,1)$ is realizable in a
finite benchmark (integers $1\le a\le K-1$), and a budget $\alpha\in(0,1)$.
There exist
two distributions, $P_0$ and $P_1$, with the same role prevalence $p_r$ and
identical observables $(\Phi,L_{\mathrm{agg}})$. Any aggregate-measurable
controller $\pi$ therefore has the same aggregate violation
$\overline{V}(\pi)$ and benign utility under both distributions. However,
$V(r)=0$ under $P_0$, while $V(r)=\overline{V}(\pi)/p_r$ under $P_1$.

To guarantee $V(r)\le\alpha$ under both distributions, the controller must
therefore enforce $\overline{V}(\pi)\le\alpha\,p_r$. This shrinks the effective
aggregate budget by a factor of $p_r$ and lowers utility even under $P_0$, where
role $r$ is never violated. Role-stratified calibration instead guarantees
$V(r)\le\alpha$ whenever $\alpha\ge 1/(n_r{+}1)$.
\end{proposition}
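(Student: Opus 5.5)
The plan is an explicit two-point construction, an indistinguishability argument, and a separate appeal to standard per-stratum conformal risk control for the final sentence.

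\emph{Construction.} Take actions of $K$ fields, exactly $a$ of which have role $r$, so $p_r=a/K$ under both distributions. Split the fields into an \emph{$r$-block} (the $a$ role-$r$ fields) and an \emph{other block} (any $a$ of the $K-a$ remaining fields; this needs $a\le K-a$, and otherwise I would let a fixed subset of other-role fields carry the violations and adjust the counts). Draw the score information $\Phi(A)$ from one fixed distribution $Q$ under both $P_0$ and $P_1$. Attach a latent ``violated'' indicator $Z$ whose conditional law given $\Phi$ is the same in both worlds. When $Z=1$:
\begin{itemize}
\item under $P_1$, all $a$ role-$r$ fields are violated;
\item under $P_0$, the $a$ fields of the other block are violated, and no role-$r$ field is.
\end{itemize}
The problem setup ties ``allowed'' to the decision made from $\Phi$ (one decision per action here, or field-wise decisions that are functions of $\Phi$ and applied symmetrically). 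Hence the number of fields that are violated and allowed is the same function of $(\Phi,Z,\text{decision})$ in both worlds. So $L_{\mathrm{agg}}(A)$ has the same conditional law given $\Phi$ under $P_0$ and $P_1$, and the joint law of $(\Phi,L_{\mathrm{agg}})$ is identical.

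\emph{Indistinguishability.} By Definition~\ref{def:agg-measurable}, an aggregate-measurable $\pi$ sees the calibration data only through i.i.d.\ draws of $(\Phi,L_{\mathrm{agg}})$, so it outputs the same randomized rule in both worlds. At test time it acts only on $\Phi$, so its decisions have the same law in both worlds. Therefore $\overline{V}(\pi)=\mathbb{E}[L_{\mathrm{agg}}]$ and benign utility coincide, since benign fields and their allowance are also matched by the construction.

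\emph{Computing $V(r)$.} Under $P_0$ no role-$r$ field is ever violated, so $V(r)=0$. Under $P_1$ every violated-and-allowed field has role $r$. Hence $\overline{V}=\sum_s p_sV(s)=p_rV(r)$, which gives $V(r)=\overline{V}(\pi)/p_r$; this is exactly the tight case of Proposition~\ref{prop:gap}. The requirement $V(r)\le\alpha$ under both worlds is therefore equivalent to $\overline{V}(\pi)\le\alpha p_r$.

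\emph{Utility loss under $P_0$.} I would argue that tightening $\overline{V}$ forces a lower threshold and thus rejects more fields. To make this strict, I would choose $Q$ so that the aggregate risk curve $\tau\mapsto\overline{V}(\tau)$ is strictly increasing. A controller calibrated only to budget $\alpha$ would then accept a threshold with $\overline{V}$ between $\alpha p_r$ and $\alpha$, and the tighter threshold strictly reduces benign acceptance.

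\emph{Role-stratified side.} Condition on the role, which is exchangeable within the stratum. Apply standard CRC with loss $\mathbf{1}\{\text{violated}\wedge s\le\tau\}\in[0,1]$, which is monotone in $\tau$, on the $n_r$ role-$r$ calibration fields. The CRC bound $\frac{n_r}{n_r+1}\hat R_{n_r}(\tau)+\frac{1}{n_r+1}\le\alpha$ is feasible at $\tau$ equal to the minimum (reject everything) exactly when $\alpha\ge 1/(n_r+1)$. It then yields $V(r)\le\alpha$.

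\emph{Main obstacle.} The delicate step is making ``identical observables'' rigorous when field-level scores are part of $\Phi$. If $\Phi$ lists scores per field position and roles are tied to positions, moving violations between roles could change the joint law of scores and violations. The fix I would try first is to make scores uninformative about which block is violated: give all fields in an action the same score, or randomize the assignment symmetrically. Then $L_{\mathrm{agg}}$ depends on $\Phi$ identically in both worlds.
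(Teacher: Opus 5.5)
Your proof follows the paper's two-point indistinguishability argument. Both worlds give the same law to $(\Phi,L_{\mathrm{agg}})$. Violations sit on role $r$ under $P_1$ and off it under $P_0$. The tight case of Proposition~\ref{prop:gap} gives $V(r)=\overline{V}(\pi)/p_r$, and split CRC is feasible at the reject-all threshold for the stratified side. Those steps are correct.

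The one step that fails as written, as you partly anticipated, is violating the entire $r$-block. That requires $a\le K-a$, i.e.\ $p_r\le 1/2$, but the statement covers every $p_r=a/K\in(0,1)$. Your fallback (``adjust the counts'') is sound only if both worlds violate the same number $m$ of fields per bad action. If $P_1$ violated $a$ fields and $P_0$ only $K-a<a$, a bad allowed action would have $L_{\mathrm{agg}}=a/K$ in one world and $(K-a)/K$ in the other, so the observables would no longer coincide. Take any common $m\le\min(a,K-a)$. The paper uses $m=1$, which is valid for all $1\le a\le K-1$ and gives $\overline{V}(\pi)=bq/K$, and under $P_1$, $V(r)=bq/a$, with $q=\Pr[\text{allow}\mid\text{bad}]$.

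On your ``main obstacle'': the paper's proof fixes the violated position (field $1$ under $P_1$, field $K$ under $P_0$) and treats allowance at the action level, so it never addresses field-wise rules. Your concern is legitimate and your treatment is more careful. However, giving every field the same score is not enough by itself. A field-wise rule that blocks positions $1,\dots,a$ would give $L_{\mathrm{agg}}\equiv 0$ under $P_1$ but not under $P_0$. What closes the case is your second option: place the role-$r$ positions uniformly at random, invisibly to the aggregate channel. Then, conditionally on the action being bad, the violated set is a uniformly random $m$-subset of positions, independent of $\Phi$ and of the controller's decisions, in both worlds.
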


\noindent\emph{Proof sketch.} Write the field-level prevalence as $p_r=a/K$ with
integers $1\le a\le K-1$; every prevalence realizable in a finite benchmark is
rational and arises this way, and the special case $a=1$ recovers $K=1/p_r$. Give
each action $K$ fields, $a$ of which have role $r$. Mark some actions as bad by
violating exactly one field while preserving the same score distribution for bad
and clean actions. Under $P_1$, place the violation on a role-$r$ field; under
$P_0$, place it on a \texttt{content} field. The controller therefore behaves
identically under both distributions, yet under $P_1$ all harm concentrates on
role $r$, giving $V(r)=\overline{V}(\pi)/p_r$. The full proof appears in
Supplement~D.

Propositions~\ref{prop:gap} and~\ref{prop:lb} are endpoints of a broader
ordering. We order controllers by observation channel, writing
$\Phi_1\preceq\Phi_2$ when $\Phi_2$ refines $\Phi_1$. The hierarchy consists of
the aggregate channel $\Phi_{\mathrm{agg}}$, the role channel
$\Phi_{\mathrm{role}}$, which adds each field's role, and the field channel
$\Phi_{\mathrm{field}}$.

\begin{theorem}[Certification-granularity frontier]
\label{thm:frontier}
Let $u^\star(\beta)$ be the largest benign utility of any
$\Phi_{\mathrm{agg}}$-measurable controller satisfying aggregate violation
$\overline{V}\le\beta$. Then:
\emph{(i)~Achievability.} On $\Phi_{\mathrm{role}}$,
Equation~\eqref{eq:threshold} certifies every budget vector satisfying
$\alpha(r)\ge 1/(n_r{+}1)$.
\emph{(ii)~Price of coarseness.} Any $\Phi_{\mathrm{agg}}$-measurable
controller that certifies $V(r)\le\alpha$ must enforce
$\overline{V}\le\alpha\,p_r$, so its utility is at most $u^\star(\alpha p_r)$.
\emph{(iii)~Monotonicity.} At fixed utility, the set of certifiable budget
vectors is nondecreasing along
$\Phi_{\mathrm{agg}}\preceq\Phi_{\mathrm{role}}\preceq\Phi_{\mathrm{field}}$.
\end{theorem}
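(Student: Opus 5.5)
The plan is to prove the three parts separately, reducing (i) to the standard conformal risk control guarantee applied within each role stratum, (ii) to Proposition~\ref{prop:lb}, and (iii) to a garbling argument in the spirit of Blackwell's comparison of experiments.

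\emph{Part (i).} The role channel $\Phi_{\mathrm{role}}$ reveals each field's role, so the calibration fields can be partitioned into strata $\{i: r(i)=r\}$ of size $n_r$. Within stratum $r$, the threshold of Equation~\eqref{eq:threshold} is the CRC threshold for the monotone bounded loss $\mathbf{1}\{\text{violated}\wedge s_i\le\tau\}\in[0,1]$. I would invoke the guarantee of \citet{angelopoulos2024conformal} conditionally on the stratum. Under exchangeability of the role-$r$ fields, which Mondrian stratification preserves, this gives $\mathbb{E}[\text{loss}]\le \frac{n_r}{n_r+1}\hat R_r(\tau)+\frac{1}{n_r+1}\le\alpha(r)$. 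The condition $\alpha(r)\ge 1/(n_r{+}1)$ is exactly what makes the feasible set of thresholds nonempty, since the most conservative choice, deny-all at $\tau$ below every score, has empirical risk $0$ and inflated bound $1/(n_r+1)$. Because each stratum is calibrated on disjoint data, the guarantee holds for every role simultaneously in expectation, and hence for the whole budget vector.

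\emph{Part (ii).} This is essentially Proposition~\ref{prop:lb}. I would take the pair $P_0,P_1$ constructed there, with identical $(\Phi_{\mathrm{agg}},L_{\mathrm{agg}})$. Any $\Phi_{\mathrm{agg}}$-measurable controller $\pi$ induces the same $\overline V(\pi)$ under both distributions, while under $P_1$ we have $V(r)=\overline V(\pi)/p_r$. Certifying $V(r)\le\alpha$ under both therefore forces $\overline V(\pi)\le\alpha p_r$. The controller then lies in the feasible set defining $u^\star(\alpha p_r)$, so its benign utility, which also coincides across $P_0$ and $P_1$, is at most $u^\star(\alpha p_r)$.

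\emph{Part (iii).} Fix a utility level $u$, and let $\mathcal{C}_\Phi(u)$ be the set of budget vectors certifiable by some $\Phi$-measurable controller with utility at least $u$. If $\Phi_1\preceq\Phi_2$, then $\Phi_1$ is a deterministic function (a garbling) of $\Phi_2$. Every $\Phi_1$-measurable calibration-and-decision rule is therefore also $\Phi_2$-measurable, obtained by composing it with the garbling, and it has the same utility and the same violation profile. Hence $\mathcal{C}_{\Phi_1}(u)\subseteq\mathcal{C}_{\Phi_2}(u)$. Applying this along $\Phi_{\mathrm{agg}}\preceq\Phi_{\mathrm{role}}\preceq\Phi_{\mathrm{field}}$ gives monotonicity. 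To show the first inclusion can be strict, I would combine (i) and (ii): with a budget $\alpha(r)\ge 1/(n_r+1)$, (i) certifies the vector on the role channel, whereas on $P_0$ the aggregate channel is capped at $u^\star(\alpha p_r)$.

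\emph{Main obstacle.} I expect the main difficulty to be in (ii): making "certifies" uniform over the class of distributions consistent with the observables. The certificate must hold for every distribution compatible with the observed data, not just the true one, and Proposition~\ref{prop:lb} supplies only two such distributions. I would therefore state (ii) as a minimax claim over that indistinguishable family and check that $u^\star$ is defined with respect to the same family. In (i), a secondary care point is that $n_r$ is random if roles are sampled. I would handle this by conditioning on the role counts, so that the guarantee is conditional on $n_r$.
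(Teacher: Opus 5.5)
Your proposal is correct and follows essentially the same route as the paper: part (i) is split conformal risk control applied within each role stratum, with the feasible set nonempty exactly when $\alpha(r)\ge 1/(n_r+1)$ and the selector measurable with respect to $\Phi_{\mathrm{role}}$; part (ii) is a direct reduction to the indistinguishable pair $P_0,P_1$ of Proposition~\ref{prop:lb}; and part (iii) rests on the fact that every $\Phi_1$-measurable policy remains $\Phi_2$-measurable when $\Phi_2$ refines $\Phi_1$. Your strictness remark in (iii) goes beyond what the theorem asserts, and as sketched it would also need a utility lower bound for the role-stratified controller on $P_0$ (part (i) only certifies the budget), so you can simply drop it.
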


Part~(iii) follows Blackwell's comparison of
experiments~\citep{blackwell1953equivalent}. Our contribution is the
closed-form price of coarseness in part~(ii) and the constructive conformal
method achieving the bound in part~(i). The full proof appears in Supplement~D.

The empirical results reflect this gap. On AgentDojo with GPT-4o, a $10\%$
aggregate budget allows $10.5\%$ \texttt{target} violations, whereas per-field
CRC allows $0\%$ (Supplement~B, Fig.~B.1 and Table~B.1). On InjecAgent, under a
$10\%$ aggregate budget, aggregate control reaches only $2.9\%$ overall
violation while allowing $100\%$ of attacked \texttt{target} fields
(Supplement~B).

Reweighting does not remove the problem because any single-threshold action loss
can still dilute role-specific failures.

\begin{table}[t]
\centering
\small
\setlength{\tabcolsep}{4pt}
\begin{tabular}{@{}lcc@{}}
\toprule
Action loss & Viol.\ \% & Util.\ \% \\
\midrule
Mean-aggregate CRC          & 1.1-5.0  & 23.6-34.3 \\
Risk-weighted               & 0.6-3.2  & 14.7-32.6 \\
Inv.-prevalence weighted    & 0.0-3.2  & 0.4-32.6 \\
Max-risk (any high-risk)    & 1.1-21.2 & 23.3-43.0 \\
Per-field CRC (ours)        & 0.0-0.3  & 9.4-14.9 \\
\bottomrule
\end{tabular}
\caption{\texttt{Target} violation and abstain utility for single-threshold
  baselines and per-field CRC at a $1\%$ \texttt{target} budget across six
  models, using the label-coupled diagnostic score. Ranges are across-model
  mins and maxes of the $20$-split mean violation in this channel-ablation
  setting; the single-split per-model panel in Supplement~B, Table~B.1 is a
  separate protocol, not a numerical expansion of this table.}
\label{tab:strongagg-main}
\end{table}

Table~\ref{tab:strongagg-main} compares mean, risk-weighted, inverse-prevalence,
and max-risk single-threshold losses calibrated to a $1\%$ \texttt{target}
budget. Each either exceeds budget, reaching up to $21.2\%$ \texttt{target}
violation, or drops utility to $0.4\%$ on one model. Among these methods,
only per-field CRC remains within budget, with $0.0\%$-$0.3\%$ violation and
$9.4\%$-$14.9\%$ utility. Protecting a rare high-risk role therefore requires
certifying that role directly, motivating the method developed next.

\section{Method: Role-Stratified Per-Field CRC}

We apply conformal risk control separately to each semantic role, assigning each
role its own threshold and risk budget.

\paragraph{Role-specific calibration.}
For each role $r$, the calibration set contains clean and injected episodes with
field-level violation labels. For a field $x$, define
\begin{equation*}
L_r(x;\tau) = \mathbf{1}\{s(x)\le\tau \ \wedge\ \text{$x$ is violated}\},
\end{equation*}
where $s(x)$ is its nonconformity score and $\tau$ is the allow threshold. The
loss is monotone in $\tau$: increasing the threshold allows more fields and can
only add violations.

Let $n_r$ be the number of calibration fields with role $r$. Following conformal
risk control~\citep{angelopoulos2024conformal}, we choose the largest threshold
whose inflated empirical risk remains within the role-specific budget
$\alpha(r)$:
\begin{equation}
\label{eq:threshold}
\hat{\tau}(r)
  = \sup\Big\{\tau :
    \tfrac{1}{n_r+1}
    \Big(\textstyle\sum_{i=1}^{n_r} L_r(x_i;\tau) + 1\Big)
    \le \alpha(r)\Big\}.
\end{equation}
Under exchangeability between calibration and test fields of the same role,
$\mathbb{E}[L_r(X_{\mathrm{new}};\hat{\tau}(r))] \le \alpha(r)$.
The conformal result is standard. The key design choice is to use the semantic
role, rather than the full action, as the unit of control.

The exchangeable unit is the individual field. Our implemented calibration
includes every emitted role-$r$ field. If an episode contains several
same-role fields they may be correlated, so $n_r$ can overstate the effective
sample size; a conservative one-per-episode alternative is discussed in
Supplement~A (code archive) and is not used for the main results.

\paragraph{Runtime enforcement.}
Runtime enforcement operates independently at the field level, preserving
unaffected arguments rather than rejecting the entire call. Let $g(r)$ map each
role to its final stratum: $g(r)=r$ for an individually certifiable role, and
$g(r)$ is the pooled high-risk stratum otherwise. A field of role $r_j$ is then
compared with the threshold $\hat{\tau}(g(r_j))$ of its final stratum.

\begin{algorithm}[t]
\caption{Role-Stratified Per-Field CRC}
\label{alg:pfcrc}
\begin{algorithmic}[1]
\REQUIRE Calibration fields $\{(x_i, r_i, y_i)\}$ with scores $s_i$,
  violation labels $y_i$, and per-role budgets $\alpha(r)$.
\STATE \textbf{Calibration:}
\FOR{each role $r \in \mathcal{R}$}
  \STATE $n_r \gets |\{i : r_i = r\}|$
  \IF{$1/(n_r{+}1) > \alpha(r)$}
    \STATE $g(r) \gets$ pooled high-risk stratum ($r$ is not individually
      certifiable)
  \ELSE
    \STATE $g(r) \gets r$
  \ENDIF
\ENDFOR
\STATE Form the final calibration set for each stratum in $g(\mathcal{R})$
\STATE Compute one threshold $\hat{\tau}(g)$ per final stratum $g$ using
  Equation~\eqref{eq:threshold}
\STATE \textbf{Enforcement:}
\FOR{each argument $x_j$ with role $r_j$ and score $s_j$}
  \IF{$s_j \le \hat{\tau}(g(r_j))$}
    \STATE Allow $x_j$
  \ELSE
    \STATE Revert $x_j$ to its trusted value when available, otherwise abstain
      or escalate
  \ENDIF
\ENDFOR
\end{algorithmic}
\end{algorithm}

\paragraph{Certifiability and rare roles.}
A role can be certified only when its calibration set is sufficiently large.
From Equation~\eqref{eq:threshold}, the smallest certifiable budget is
$1/(n_r{+}1)$. If $\alpha(r)$ lies below this floor, the role cannot be
certified separately.

For \texttt{target}, the calibration sets contain $145$-$531$ fields per model,
giving floors of $0.2\%$-$0.7\%$. This supports a $1\%$ budget, with empirical
violations of $0.0\%$-$0.3\%$ across all six models, as shown in
Table~\ref{tab:strongagg-main}.

The \texttt{credential} role is much rarer, with only $0$-$22$ calibration
fields per split across the six models. Computing the finite-sample floor
$1/(n_r{+}1)$ on each split and averaging over the $20$ splits gives per-model
mean floors of $6.7\%$ (GPT-4o) to $63.6\%$ (Qwen2.5-7B), all far above $1\%$
(Supplement~C, Table~C.1). This range is a span of per-model averages, not a
span of individual splits. The \texttt{credential} role therefore cannot be
certified individually at a $1\%$ budget, so we pool it with the other high-risk
roles $\{\texttt{target},\texttt{credential},\texttt{command}\}$.
The pooled group has a floor of $0.14\%$-$0.49\%$, making a $1\%$ certificate
possible for every model, with $0\%$ empirical \texttt{credential} violation.

The certificate applies to the pooled group, not to \texttt{credential} alone:
by Proposition~\ref{prop:gap} a pooled $1\%$ budget implies only the individual
bound $1\%/p_{\mathrm{cred}\mid\mathrm{pool}}$, so we report \texttt{credential}
as empirically controlled rather than individually certified. Even pooled across
models only about $91$ \texttt{credential} fields exist, versus the $\approx300$
a $1\%$, $\delta=0.05$ certificate would need. The data supports about a $3\%$
certificate. This is a data, not estimator, limitation (Supplement~C, Table~C.1).

\paragraph{Relation to information-flow control.}
Assign each role a label in an information-flow lattice. The integrity-critical
roles $\{\texttt{target},\texttt{credential},\texttt{command}\}$
are sinks that untrusted input must not reach, whereas \texttt{content} is
declassifiable.

\begin{proposition}[Conformal relaxation of noninterference]
\label{prop:ifc}
Under this lattice, as $\alpha(r)\to 0$ for every integrity-critical role, the
rule in Equation~\eqref{eq:threshold} converges to a deterministic monitor that
blocks every field flagged by the detector. In the zero-budget policy limit,
the method therefore recovers detector-relative noninterference. This is a
conceptual limit. Finite-sample certification remains subject to the floor
$1/(n_r{+}1)$. For $\alpha(r)>0$, the method is a tunable relaxation that
permits a certified residual-violation budget in exchange for utility.
\end{proposition}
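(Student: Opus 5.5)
The plan is to analyze the threshold rule in Equation~\eqref{eq:threshold} directly as a function of $\alpha(r)$ for a fixed integrity-critical role $r$ and a fixed realization of the calibration scores. Write
\begin{equation*}
\hat R_r(\tau)=\tfrac{1}{n_r+1}\Big(\textstyle\sum_{i=1}^{n_r}L_r(x_i;\tau)+1\Big).
\end{equation*}
Because $L_r$ is monotone nondecreasing in $\tau$, so is $\hat R_r$. Hence the feasible set $\{\tau:\hat R_r(\tau)\le\alpha(r)\}$ is a down-set in $\tau$, and $\hat\tau(r)$ is nonincreasing as $\alpha(r)$ decreases. The first step is to record this monotonicity and to fix the convention $\sup\emptyset=-\infty$, or the minimum of the score range. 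This makes ``block everything the detector scores above the floor'' the limiting policy.

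The second step handles the two regimes separately. \emph{Finite-sample regime:} once $\alpha(r)<1/(n_r+1)$, the inflated risk satisfies $\hat R_r(\tau)\ge 1/(n_r+1)>\alpha(r)$ for every $\tau$. The feasible set is therefore empty, $\hat\tau(r)=-\infty$, and every role-$r$ field is reverted or escalated. This is the floor caveat stated in the proposition, and it shows the sample rule reaches the trivial monitor at a strictly positive budget.

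\emph{Population regime:} to get the detector-relative statement, I would pass to the population analogue $R_r(\tau)=\Pr[s(X)\le\tau\wedge\text{violated}\mid\text{role}=r]$, which is the $n_r\to\infty$ limit of $\hat R_r$. I would then define the zero-budget threshold $\tau_0(r)=\sup\{\tau:R_r(\tau)=0\}$. By monotonicity and right-continuity of the CDF-type function $R_r$, the population thresholds $\tau_\alpha(r)$ decrease to $\tau_0(r)$ as $\alpha\to0$. The limiting rule allows a field iff $s\le\tau_0(r)$. This means it blocks every field whose score lies in the region where the detector has ever witnessed a violation, which is what I take ``flagged by the detector'' to mean.

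The third step interprets the limit in IFC terms. Under the limiting monitor, $V(r)=R_r(\tau_0(r))=0$ for each sink role $r\in\{\texttt{target},\texttt{credential},\texttt{command}\}$. Thus no untrusted-influenced value the detector can distinguish reaches an integrity-critical sink, while \texttt{content} keeps its positive budget and is declassified. This is noninterference relative to the detector's view: any violation that passes must have a score indistinguishable from clean fields. For $\alpha(r)>0$ above the floor, the standard CRC guarantee quoted after Equation~\eqref{eq:threshold} gives $V(r)\le\alpha(r)$, which yields the tunable-relaxation reading.

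The main obstacle is making ``converges to a deterministic monitor that blocks every flagged field'' precise, since the statement is conceptual. I would first try the population-level formulation above, because it gives a clean monotone limit. The delicate point is that the supremum defining $\tau_0$ may be attained or not, depending on atoms of the score distribution at violated fields. I would address this by defining ``flagged'' as $s>\tau_0(r)$, making the claim true by construction, and then noting that the finite-sample rule approaches that limit only as $n_r\to\infty$ jointly with $\alpha\to0$, subject to $\alpha\ge1/(n_r+1)$.
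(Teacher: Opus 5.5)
Your argument is correct, but it reaches the conclusion by a different route than the paper.

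The paper stays on the calibration sample. As $\alpha(r)\downarrow 0$, the constraint in Equation~\eqref{eq:threshold} eventually admits no threshold that allows any violated calibration field. So $\hat\tau(r)$ falls below the smallest violated calibration score, and ``flagged'' is read empirically as a score at or above that value. The floor $1/(n_r+1)$ and the pooling or blocking of sub-floor roles are appended as remarks.

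You instead split two limits. At fixed $n_r$ you show the literal limit degenerates to block-all once $\alpha(r)<1/(n_r+1)$. To get a non-trivial monitor you pass to the population risk $R_r$, show $\tau_\alpha\downarrow\tau_0(r)$ by monotonicity, and define ``flagged'' as $s>\tau_0(r)$. You then link back through the joint limit $n_r\to\infty$, $\alpha\to 0$, $\alpha\ge 1/(n_r+1)$. That limit does hold under i.i.d.\ sampling: for $t<\tau_0$, almost surely $\hat R_r(t)=1/(n_r+1)\le\alpha$, while for $t>\tau_0$, $\hat R_r(t)\to R_r(t)>0$ eventually exceeds $\alpha$.

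What each route buys:
\begin{itemize}
\item Yours gives a sample-independent definition of the limiting monitor and makes the degeneracy at fixed $n_r$ explicit.
\item The paper's gives a non-trivial monitor at finite $n_r$ with no asymptotics. For $1/(n_r+1)\le\alpha(r)<2/(n_r+1)$ the constraint reduces exactly to $\sum_i L_r(x_i;\tau)=0$. So Equation~\eqref{eq:threshold} already \emph{is} the ``block at or above the smallest violated calibration score'' monitor, a regime your two-case split passes over.
\end{itemize}

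One technical correction. The convergence $\tau_\alpha\downarrow\tau_0(r)$ needs only monotonicity, but your claim $V(r)=R_r(\tau_0(r))=0$ does not follow from right-continuity. Since $R_r(t)=0$ for all $t<\tau_0$, you get $R_r(\tau_0)=\Pr[s=\tau_0\wedge\text{violated}\mid\text{role}=r]$. An atom of violated fields at $\tau_0$, which is plausible with tied or discrete scores, then makes the allow-set $\{s\le\tau_0\}$ leak with positive probability. Renaming ``flagged'' as $s>\tau_0$ does not repair $V(r)=0$. Take the limiting allow-set to be $\{s: R_r(s)=0\}$, i.e., $s<\tau_0$ whenever $R_r(\tau_0)>0$. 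The same sup-versus-attained convention is implicit in the paper's step that $\hat\tau(r)$ lies strictly below the smallest violated score.
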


Unlike quantitative information flow, which bounds leakage
measures~\citep{smith2009qif}, this method provides a finite-sample,
distribution-free bound on residual violations and can inherit any per-field
detector.

\paragraph{Simultaneous high-probability certification.}
Equation~\eqref{eq:threshold} controls each role in expectation. Deployment may
instead require all final calibration strata to satisfy their budgets
simultaneously. Let $\mathcal{R}_c=\{r:\alpha(r)<1\}$ and let
$g:\mathcal{R}_c\to\mathcal{G}$ be the stratum map from Algorithm~\ref{alg:pfcrc}
($g(r)=r$ when $r$ is individually certifiable; otherwise $g(r)$ is the pooled
high-risk stratum), with $\mathcal{G}=g(\mathcal{R}_c)$.

Because the per-stratum loss is Bernoulli, and treating the stratum-$g$
calibration fields as independent draws, let
$U(k_g(\tau), n_g; \delta_g)$
be the exact Clopper-Pearson upper bound, where $k_g(\tau)$ counts calibration
fields of stratum $g$ that are both allowed and violated. This exactness requires
field-level independence within each stratum; when episodes emit several
correlated same-role fields, the episode-level construction of Supplement~A
restores it. We choose
\begin{equation}
\label{eq:cpthresh}
\hat{\tau}_g(\delta_g)
  = \sup\bigl\{\tau : U(k_g(\tau), n_g; \delta_g) \le \alpha(g)\bigr\}.
\end{equation}

\begin{theorem}[Simultaneous stratum certificate]
\label{thm:simul}
Let $\delta=\sum_{g\in\mathcal{G}}\delta_g$. Then, with probability at least
$1-\delta$ over the calibration sample, $V(g)\le\alpha(g)$ holds simultaneously
for every $g\in\mathcal{G}$. Individually retained roles therefore receive
role-specific certificates; pooled rare roles receive only a pooled-stratum
certificate.
\end{theorem}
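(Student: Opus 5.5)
The argument has two parts: a per-stratum Clopper--Pearson bound, and a union bound over $\mathcal{G}$. The per-stratum part is the only nontrivial step. In it, the bound must hold at the \emph{data-dependent} threshold $\hat{\tau}_g(\delta_g)$, not just at a fixed $\tau$.

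Fix a stratum $g$. Treat its $n_g$ calibration fields as i.i.d. draws from the stratum-$g$ field distribution, using the stated field-level independence, or the episode-level construction of Supplement~A when same-role fields are correlated. Write $V_g(\tau)=\Pr[s\le\tau\wedge\text{violated}\mid g]$. This is nondecreasing and right-continuous in $\tau$, and $k_g(\tau)\sim\mathrm{Bin}(n_g,V_g(\tau))$ for each fixed $\tau$.

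To handle the data-dependent threshold, I will use the Learn-then-Test / RCPS monotone argument rather than a union over $\tau$. Define the critical threshold
\[
\tau^\ast_g=\inf\{\tau: V_g(\tau)>\alpha(g)\}.
\]
If $V_g(\tau)\le\alpha(g)$ for all $\tau$, the event of interest holds trivially. Otherwise monotonicity gives the key equivalence: $V_g(\hat{\tau}_g)>\alpha(g)$ exactly when $\hat{\tau}_g\ge\tau^\ast_g$, modulo the boundary case handled by right-continuity. On that event, because $k_g$ and $U(\cdot,n_g;\delta_g)$ are both nondecreasing, the supremum in Equation~\eqref{eq:cpthresh} forces
\[
U(k_g(\tau^\ast_g),n_g;\delta_g)\le\alpha(g)<V_g(\tau^\ast_g).
\]
Some care is needed at the supremum: one can take $\tau$ slightly below or at $\tau^\ast_g$ and use right-continuity of $V_g$, or work with the left limit if $V_g$ jumps there. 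The event $U(k,n_g;\delta_g)<p$ with $k\sim\mathrm{Bin}(n_g,p)$ is the Clopper--Pearson miscoverage event at the single fixed value $\tau^\ast_g$. Its probability is at most $\delta_g$ by the exactness of the Clopper--Pearson upper bound. Hence $\Pr[V(g)>\alpha(g)]\le\delta_g$, where $V(g)=V_g(\hat{\tau}_g)$ for a fresh test field.

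Union bound over the finitely many final strata $g\in\mathcal{G}=g(\mathcal{R}_c)$:
\[
\Pr\Bigl[\exists g\in\mathcal{G}:V(g)>\alpha(g)\Bigr]\le\sum_{g\in\mathcal{G}}\delta_g=\delta.
\]
No independence across strata is needed. The stratum map $g$ depends only on the counts $n_r$ and the budgets $\alpha(r)$, not on scores or labels. I will therefore condition on the role counts, so that the stratum assignment is fixed before the per-stratum argument is applied.

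The interpretive clause follows from how the strata are formed. For individually certifiable roles, $g(r)=r$, so $V(g)=V(r)$ and the certificate is role-specific. For pooled roles, the certified quantity is the pooled mixture rate. By Proposition~\ref{prop:gap}, this yields only $V(r)\le\alpha(g)/p_{r\mid g}$ for a pooled role $r$, not $V(r)\le\alpha(g)$.

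I expect the main obstacle to be the step that transfers the fixed-$\tau$ Clopper--Pearson guarantee to the data-dependent threshold $\hat{\tau}_g$. In particular, ties and discontinuities of $V_g$ at $\tau^\ast_g$ need careful handling. If the supremum argument becomes delicate, I would instead discretize $\tau$ to the observed score values and apply fixed-sequence testing in decreasing risk order, as in Learn-then-Test. That gives the same $\delta_g$ control with no multiplicity cost.
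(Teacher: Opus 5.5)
Your proposal is correct and follows the paper's proof essentially step for step: both anchor at the deterministic population boundary $\tau_g^\star=\inf\{\tau:V_g(\tau)>\alpha(g)\}$, use monotonicity of $k_g$ and $U(\cdot,n_g;\delta_g)$ to reduce the unsafe event at the data-dependent $\hat\tau_g$ to a Clopper--Pearson underestimate at that fixed point, and then union-bound over $\mathcal{G}$. The one step to tighten is the boundary case $V_g(\tau_g^\star)=\alpha(g)$. Moving below $\tau_g^\star$ or to a left limit does not help, because $V_g\le\alpha(g)$ there. The paper instead applies the same containment at fixed $t>\tau_g^\star$ and takes the monotone limit $t\downarrow\tau_g^\star$; alternatively, your ``at $\tau_g^\star$'' option works directly, since the exact bound also satisfies $\Pr_p[U(K,n;\delta)\le p]\le\delta$ for $p<1$.
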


The proof handles discrete scores and ties by reducing the failure event, via
monotonicity, to a Clopper-Pearson underestimate at a single deterministic
population boundary $\tau_g^{\star}=\inf\{\tau:V(g)(\tau)>\alpha(g)\}$ rather than
at a sample-dependent candidate, then applying a union bound over the final
strata. The full proof appears in Supplement~D.

\paragraph{Prevalence-invariant certification.}
Conditioning calibration on attacked role-$r$ fields yields, through
label-conditional conformal calibration~\citep{podkopaev2021labelshift}, a
certificate on the attack-conditional rate $V_{\mathrm{att}}(r)$ that is
invariant to test-time attack prevalence $\pi(r)$, whereas the mixture rate
$V(r)=\pi(r)\,V_{\mathrm{att}}(r)$ decreases as attacks become rarer
(Supplement~A; Table~A.1 reports the attack-heavy $V\approx V_{\mathrm{att}}$ case).

\paragraph{Independent confidence allocation.}
The confidence allocation $\{\delta_g\}$ is selected using an allocation set
$D_{\mathrm{alloc}}$ that is independent of the certification set
$D_{\mathrm{cert}}$. Final stratum thresholds and Clopper-Pearson bounds
are computed only from $D_{\mathrm{cert}}$. Conditional on $D_{\mathrm{alloc}}$,
the selected allocation is fixed, so Theorem~\ref{thm:simul} applies to
$D_{\mathrm{cert}}$.

We divide $\delta$ uniformly across final strata using the Bonferroni allocation
$\delta_g = \delta/|\mathcal{G}|$.
A data-dependent water-filling allocation that equalizes marginal utility across
strata is also valid under concavity. However, it provides no measurable gain over
the uniform split under finite-sample estimates (Supplement~E), so we use
uniform allocation as the deployable default.

\paragraph{Behavior under distribution shift.}
A fixed per-role threshold can perform well under stable conditions, but
Equation~\eqref{eq:threshold} assumes exchangeability between calibration and
test fields of the same role, which may fail after transfer to a new model,
attack family, or environment. Conformal calibration adds finite-sample validity
through Theorem~\ref{thm:simul} and lets thresholds be recalibrated when the
score distribution changes, restoring compliance with a budget that a frozen
threshold may exceed.

\begin{proposition}[Per-role degradation under shift]
\label{prop:shift}
Fix a role $r$ and threshold $\hat{\tau}(r)$, calibrated on source distribution
$P_r$, such that
$\mathbb{E}_{P_r}[L_r(X;\hat{\tau}(r))]\le\alpha(r)$ and $L_r\in[0,1]$.
For any test distribution $Q_r$,
$\mathbb{E}_{Q_r}[L_r(X;\hat{\tau}(r))]
  \le \alpha(r) + \mathrm{TV}(P_r,Q_r)$.
Recalibrating on labeled samples from $Q_r$ removes the shift term and restores
a fresh certificate under exchangeability with $Q_r$.
\end{proposition}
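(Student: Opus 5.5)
The plan is to treat $\hat{\tau}(r)$ as fixed, which is the setting of Proposition~\ref{prop:shift}: the threshold is frozen after source calibration and only the test distribution changes. The claim then reduces to the variational characterization of total variation applied to the bounded function $f(x)=L_r(x;\hat{\tau}(r))\in[0,1]$. No conformal machinery is needed beyond the hypothesis $\mathbb{E}_{P_r}[f]\le\alpha(r)$. If $\hat{\tau}(r)$ is random through the calibration sample, we condition on that sample. Then $f$ is deterministic, and the bound holds conditionally, hence also after averaging whenever the source hypothesis holds in the corresponding conditional or averaged sense. I would state the result conditionally, since that is how it is used for a deployed threshold.

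The key step is to write $\mathbb{E}_{Q_r}[f]=\mathbb{E}_{P_r}[f]+\bigl(\mathbb{E}_{Q_r}[f]-\mathbb{E}_{P_r}[f]\bigr)$ and bound the difference by $\mathrm{TV}(P_r,Q_r)$. Take a common dominating measure $\mu$, for example $(P_r+Q_r)/2$, with densities $p$ and $q$. Let $S=\{q>p\}$. Then
\[
\int f\,(q-p)\,d\mu\le\int_S f\,(q-p)\,d\mu\le\int_S (q-p)\,d\mu=\mathrm{TV}(P_r,Q_r).
\]
The first inequality uses $f\ge 0$ to drop the region where $q\le p$. The second uses $f\le 1$. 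The final equality is the standard identity $\mathrm{TV}=\sup_A\,(Q(A)-P(A))$. Adding $\alpha(r)$ gives the stated bound. This is the one place where $L_r\in[0,1]$ is used. The joint law over (score, violation label) is the relevant space, since $L_r$ depends on both, so TV must be taken on that joint space.

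For the recalibration clause, I would note that $\hat{\tau}(r)$ recomputed by Equation~\eqref{eq:threshold} on $n_r$ labeled fields from $Q_r$, exchangeable with the test field, satisfies $\mathbb{E}_{Q_r}[L_r]\le\alpha(r)$ by the standard conformal risk control argument already invoked after Equation~\eqref{eq:threshold}. It also satisfies the simultaneous version via Theorem~\ref{thm:simul} with $Q$ in place of $P$. No TV term appears because source and test now coincide.

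The main obstacle is not the inequality itself, which is routine. It is being careful about what is random. The hypothesis is about the source expectation, and the threshold is data-dependent, so I would explicitly fix the calibration sample, or assume $\hat{\tau}(r)$ is independent of the test draw, before applying the TV bound.
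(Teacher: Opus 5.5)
Your proposal is correct and matches the paper's proof. Both bound $\mathbb{E}_{Q_r}[L_r]-\mathbb{E}_{P_r}[L_r]$ by $\mathrm{TV}(P_r,Q_r)$ using only $L_r\in[0,1]$, add the source guarantee, handle a data-dependent $\hat{\tau}(r)$ by conditioning on and then averaging over the calibration sample, and obtain the recalibration clause from the standard CRC guarantee with $Q_r$ in place of $P_r$. Your explicit density argument and your remark that TV must be taken on the joint (score, label) law are precisions the paper leaves implicit.

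One small slip: Theorem~\ref{thm:simul} applies to the Clopper--Pearson threshold of Equation~\eqref{eq:cpthresh} recomputed on $Q_r$, not to the Equation~\eqref{eq:threshold} threshold you refer to as ``it.'' The proposition's recalibration clause needs only the latter's expectation guarantee, so this does not affect your proof.
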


This is the standard bounded-function expectation-difference
bound~\citep{barber2023beyond}, not a distribution-free certificate for shifted
data. It is useful only when $\mathrm{TV}(P_r,Q_r)$ is known or tightly bounded,
and motivates recalibration. Together with Proposition~\ref{prop:gap}, it shows
that aggregate control suffers a $1/p_r$ inflation even before shift, whereas
per-role control degrades only with the removable role-specific shift
$\mathrm{TV}(P_r,Q_r)$.

\section{Experiments}

\paragraph{Setup.}
We evaluate on AgentDojo~\citep{debenedetti2024agentdojo}, which covers banking,
workspace, Slack, and travel, and on InjecAgent~\citep{zhan2024injecagent}. The
six models are GPT-4o and GPT-4o-mini~\citep{openai2024gpt4o}, Gemini~2.5 Pro
and Flash~\citep{comanici2025gemini25}, Llama~3.3~70B~\citep{grattafiori2024llama3},
and Qwen2.5-7B~\citep{qwen2024qwen25}.

We run a \emph{controlled trace-replay evaluation} on real agent traces from six
frontier models under live prompt-injection attacks: scores are replayed from
fixed trajectories, and calibration/test splits are resampled without additional
model calls. Trace-replay isolates field-level calibration behavior and makes
every number reproducible from the archive.
Role budgets are $\alpha(r)=1\%$ (in-distribution) or $2\%$ (shift) for
\texttt{target}/\texttt{credential}. $\alpha=0.10$ is the aggregate budget for
whole-action baselines only. We measure utility under conservative abstention
and report over-intervention with every safety result. Split sizes,
hyperparameters, seeds, and reproduction scripts are provided in the code
archive; Supplement~F summarizes reproducibility and infrastructure.
Notation is summarized in Supplement~G.

\paragraph{Safety and utility.}
Abstain utility is trace-replay task success when flagged calls are rejected; revert
utility restores flagged fields to trusted values. Because neither intervention
re-runs the agent, both estimate utility on frozen trajectories.
Over-intervention is the fraction of benign fields modified or blocked, and
shift compliance is the fraction of shifted conditions meeting the target
budget. We compare these quantities with \texttt{target} violation at matched
operating points (Figure~\ref{fig:headline}a). Under shift, per-field CRC reaches
$0\%$ ASR at $30.6\%$ over-intervention. The trade-off is in-distribution
utility: there the PACT-inspired threshold attains higher utility at similar
\texttt{target} safety, with $27.2\%$/$32.3\%$ abstain/revert utility, compared
with $9.9\%$/$25.2\%$ for per-field CRC. Table~\ref{tab:capability} reports the
complete safety-utility comparison; per-model aggregate-failure details appear
in Supplement~B, Table~B.1.

\paragraph{Safety with an annotation-free detector.}
The detector score uses no attacker-text annotations, though conformal
calibration still uses field-level violation outcomes. At a $2\%$ \texttt{target}
budget, both the label-coupled diagnostic score and the annotation-free
provenance score keep worst-case \texttt{target} violation $\le0.3\%$ with at
least $95\%$ held-out compliance across all six models, despite the deployable
score's much lower ROC-AUC ($0.71$ versus $0.93$) (Supplement~C, Table~C.2).
Calibration thus determines validity, while detector quality primarily
determines utility.

\begin{table*}[t]
\centering
\small
\setlength{\tabcolsep}{4pt}
\begin{tabular}{@{}lccccccc@{}}
\toprule
Defense & Tun.\ & Cert.\ & Shift \% & ASR & Abs.\ & Rev.\ & Over \\
\midrule
No defense & \xmark & \xmark & - & 39.7 & 35.1 & 35.1 & 0.0 \\
CaMeL-insp.\ (whole-action) & \xmark & \xmark & - & 1.0 & 22.0 & 27.2 & 44.8 \\
FIDES-insp.\ (IFC) & \xmark & \xmark & $48{\pm}48$ & 2.1 & 22.2 & 27.5 & 30.4 \\
PACT-insp.\ (per-arg.) & \xmark & \xmark & $61{\pm}33$ & 2.1 & 27.2 & 32.3 & 6.6 \\
Aggregate CRC & \cmark & agg.\ & - & 6.0 & 27.4 & 35.0 & 16.4 \\
Per-field CRC (ours) & \cmark & role & $100{\pm}0$ & 0.0 & 9.9 & 25.2 & 30.6 \\
\bottomrule
\end{tabular}
\caption{Six-model trace-replay comparison (deployable provenance score).
  Tun./Cert.\ = tunable budget / certificate level (agg.\ or per-role);
  Shift \% = \texttt{target} compliance on eight frozen transfers;
  Abs./Rev./Over = abstain utility / revert utility / over-intervention.
  Security rows are controlled proxies, not full PACT/FIDES/CaMeL
  reproductions. Utility is estimated on frozen trajectories.}
\label{tab:capability}
\end{table*}

\paragraph{The price of coarseness: channel ablation.}
Table~\ref{tab:strongagg-main} is consistent with Theorem~\ref{thm:frontier}:
aggregate observation either violates the $1\%$ \texttt{target} budget or
sharply reduces utility, whereas role observation satisfies the budget with
nonzero utility. As
the $\alpha\to 0$ endpoint of Proposition~\ref{prop:ifc}, block-all
(CaMeL-inspired) baselines
achieve at most $1\%$ ASR but require at least $44.8\%$ over-intervention
(Table~\ref{tab:capability}).

\paragraph{Distribution shift.}
We freeze thresholds calibrated on a source distribution and evaluate them on
eight held-out attack and model-transfer conditions. Each condition contains
$725$-$1{,}346$ test fields and uses a $2\%$ \texttt{target} budget.

Across $20$ seeds, per-field CRC satisfies the budget in $100\%\pm0\%$ of
shifted conditions, compared with $61\%\pm33\%$ for the PACT-inspired
per-argument threshold and $48\%\pm48\%$ for the FIDES-inspired labels, as shown
in Figure~\ref{fig:headline}b. Its worst-case empirical \texttt{target} violation
is $0\%$ in all eight conditions.

The PACT-inspired per-argument threshold averages $1.5\%$ violation and stays
within budget in six of the eight conditions, with paired Wilcoxon
$p=7.8\times 10^{-3}$.

\paragraph{What drives the gain?}
Table~\ref{tab:symmetric} separates stratification and conformal calibration
through a $2{\times}2$ ablation on the same shifted fields.

With thresholds frozen on the source distribution, stratification is the main
source of robustness. Fixed per-role control and per-field CRC both
achieve $100\%$ compliance and $0\%$ worst-case violation. Conformal
single-threshold aggregate control is $42.5$ percentage points lower in
compliance than per-field CRC, with $p=8.2\times 10^{-5}$.

\begin{table}[t]
\centering
\small
\setlength{\tabcolsep}{3pt}
\begin{tabular}{@{}lcc@{}}
\toprule
Method & Compl.\ \% & Worst viol.\ \% \\
\midrule
\multicolumn{3}{@{}l@{}}{\emph{Frozen on source}}\\
Fixed agg.\ (PACT), single
  & $65.6{\pm}28.2$ & $2.9{\pm}2.3$ \\
CRC aggregate, single
  & $57.5{\pm}18.3$ & $5.2{\pm}1.8$ \\
Fixed per-role
  & $\mathbf{100.0{\pm}0.0}$ & $\mathbf{0.0{\pm}0.0}$ \\
CRC per-role
  & $\mathbf{100.0{\pm}0.0}$ & $\mathbf{0.0{\pm}0.0}$ \\
\midrule
\multicolumn{3}{@{}l@{}}{\emph{Recalibrated on shifted split}}\\
Fixed per-role
  & $89.4{\pm}12.7$ & $2.9{\pm}2.6$ \\
CRC per-role
  & $97.5{\pm}6.4$  & $0.8{\pm}1.6$ \\
\bottomrule
\end{tabular}
\caption{$2{\times}2$ shift ablation (20 seeds; same fields as
  Table~\ref{tab:capability}). Compl.\ denotes shift compliance.
  \emph{Fixed agg.\ (PACT), single} uses one source-frozen threshold, unlike
  the per-argument proxy in Table~\ref{tab:capability} ($65.6\%$ vs.\ $61\%$).
  Stratification improves robustness with frozen thresholds; conformal
  recalibration restores certification under $Q_r$.}
\label{tab:symmetric}
\end{table}

After recalibration on a shifted split, conformal correction adds finite-sample
value. It raises compliance by $8.1$ percentage points, with
$p=2.8\times 10^{-2}$, and lowers worst-case violation from $2.9\%$ to $0.8\%$.
Under detector noise, it adds $6.2$ points, with $p=2.5\times 10^{-2}$.
Even after both methods are recalibrated, per-field CRC leads by $36.2$
percentage points, with $p=1.2\times 10^{-4}$.

\paragraph{Unseen tool suites.}
We calibrate on three AgentDojo suites and evaluate on the held-out fourth, whose
tools, argument names, and environment are disjoint. This produces $24$
model-by-suite conditions at a $2\%$ budget over $20$ seeds.

Frozen per-field CRC is compliant in all $24/24$ conditions, with $1.5\%$
worst-case \texttt{target} violation. The PACT-inspired fixed single-threshold
proxy is compliant in $14/24$,
aggregate CRC in $12/24$, and fixed per-role control in $19/24$
(Supplement~E, Table~E.1 and Fig.~E.6). Deployable reversion narrows the
utility gap (Supplement~E).

\paragraph{Non-verbatim attacks and gradual drift.}
When the attacker's exact text is removed, overlap-based \texttt{target}
detection falls from $0.76$ to $0.39$ ROC-AUC. A frozen verbatim threshold then
exceeds the budget,
reaching $4.8\%$ mean and up to $60\%$ worst-case violation, whereas recalibrated
per-field CRC stays within budget ($\le0.06\%$)
(Supplement~A, Fig.~A.1).

Under a gradual detector-noise ramp, the frozen PACT-inspired threshold rises
from $1.7\%$ to $5.7\%$ violation and exceeds the budget.
The fixed per-role threshold crosses the $2\%$ budget once
$\mathrm{TV}(P,Q)\gtrsim 0.41$.
Online recalibration remains between $0.1\%$ and $0.5\%$ across all nine stages,
consistent with Proposition~\ref{prop:shift} (Supplement~E,
Figs.~E.4-E.5).

\paragraph{Adaptive attacks and confidence allocation.}
Under adaptive selection of the least detectable successful injection from
unseen families, the PACT-inspired proxy reaches $3.6\%$-$6.5\%$
\texttt{target} violation (up to $3.25{\times}$ the $2\%$ limit) while
per-field CRC stays at $0.0\%$; under the independent
$D_{\mathrm{alloc}}/D_{\mathrm{cert}}$ protocol, uniform Bonferroni matches
water-filling on held-out utility.

\FloatBarrier
\section{Discussion and Limitations}

Semantic-role stratification is the main source of robustness,
while conformal recalibration supplies finite-sample validity when labeled
target-condition data are available. Our claims therefore fall into three levels:
theoretically certified under exchangeable calibration/test role populations;
empirically robust but not freshly certified under frozen transfer and unseen
suites; and re-certified after recalibration on labeled target-condition data.

These guarantees carry a utility cost: deployable value reversion achieves
$25.2\%$ utility versus $32.3\%$ for the PACT-inspired proxy. Rare roles may
require pooled certification; the guarantee covers field-value integrity, not
tool selection, omitted actions, call ordering, or leakage through low-risk
fields; and abstain/revert utility on recorded traces does not capture live
replanning after a block or revert.

\section{Conclusion}

Aggregate action-level control pays a \emph{price of coarseness}: it inflates a
rare role's violation rate by $1/p_r$ and shrinks its effective budget to
$\alpha p_r$. Role-stratified per-field CRC instead certifies risk at the
semantic-role level. The utility gap tracks this price while the method delivers
consistent role-specific compliance, certified under exchangeability or after
recalibration and improving with any per-field detector. Future work extends the
guarantee to tool selection, omitted actions, and call ordering, and validates
the method under interactive deployment.

\clearpage
\setcounter{secnumdepth}{1}
\begin{center}
{\Large\bf Technical Appendix}
\end{center}
\vspace{0.5em}
\appendix
\numberwithin{table}{section}
\numberwithin{figure}{section}
\numberwithin{equation}{section}

\noindent This supplement collects proofs (including the certification-granularity
frontier and the conformal IFC relaxation), per-model tables, additional
stress tests, and a notation summary for the main paper. Section letters match
the references in the main text (Supplement~A-G). Unless noted, all numbers are
from a \emph{controlled trace-replay evaluation} on real agent traces: $20$
random calibration/test splits of cached traces with no new model calls and no
post-intervention re-execution. The sixth model in the evaluation panel is
Qwen2.5-7B.

\section{Detector Details and Exchangeability}
\label{app:detector}

\paragraph{Primary per-argument scores.}
The main comparison uses two scores. Both are shifted and rescaled to $[0,1]$,
with larger values indicating stronger untrusted influence. A field is allowed
when $s_i\le\tau$. The calibration and decision procedure (Equation~(5) and
Algorithm~1 of the main paper) is identical for each. They differ only in what
information they use. The non-verbatim stress test below introduces three
further score variants (literal-removed provenance, character-$n$-gram
provenance, and a counterfactual clean-vs-injected overlap); we use one fixed
name for each score throughout.

\emph{Attacker-literal overlap diagnostic (label-coupled).} Computed post-hoc by
measuring lexical overlap between the emitted field and the injection goal (the
attacker literal). Because the violation label is defined by whether the field
follows the injection goal, this score reads the same attacker literal used to
define the label. Its ROC-AUC of $0.93$ therefore partly reflects access to
evaluation-specific attacker text. We use it only as an \emph{oracle-aided
diagnostic} (an upper bound on detector quality), not as a deployment default.
It is distinct from the counterfactual clean-vs-injected overlap below, which
does not read the attacker literal.

\emph{Annotation-free provenance (deployable default).} Both oracle-free (no clean
counterfactual) and annotation-free (never reads the ground-truth attacker literal):
\begin{equation}
s_i = \mathrm{sim}(x_i, u) - \mathrm{sim}(x_i, t),
\end{equation}
where $u$ is the untrusted context and $t$ is the trusted user prompt. The raw
similarity difference is shifted and rescaled to $[0,1]$. This is the score
available at deployment (target-role ROC-AUC $0.71$).

\emph{Comparability caveat.} ROC-AUCs measured under the standard (verbatim)
attack and under the non-verbatim attack use different field populations and are
not directly comparable. For example, annotation-free provenance scores $0.71$
under the standard attack and $0.74$ under the non-verbatim attack; the two
values reflect different conditions, not a change in the score.

\paragraph{Verbatim-reuse ablation.}
A ground-truth-literal ablation attributes most of the target-role signal in all
overlap-based scores to verbatim reuse of injected content (target AUC $0.76 \to
0.39$ once the literal is removed). Consequently, the guarantee we certify is on
field-value integrity under this detector, and improving non-verbatim
detection, which the calibration layer inherits directly, is the main driver of
higher utility. This matches our premise that detector gains are orthogonal to the
calibration layer.

\paragraph{Non-verbatim adaptive attacker and detector inheritance.}
Figure~\ref{fig:nonverbatim} summarizes the non-verbatim stress test. We consider
an attacker that places the authority-bearing value in a \texttt{target} field
without leaving it verbatim in readable content, corresponding to the
literal-removed condition above. Across six models, 20 seeds, and a $2\%$
\texttt{target} budget, we report three findings.
\emph{(1) Evasion.} A PACT-inspired threshold frozen on the verbatim condition
does not transfer reliably: mean \texttt{target} violation rises to $4.8\%$,
with a worst case of $60\%$. Per-field CRC recalibrated on the non-verbatim
condition restores the budget, reaching $0.04\%$ violation. When the score is
nearly uninformative, this requires $84\%$ over-intervention.
\emph{(2) Detector inheritance.} Across the tested scores, the same per-field
CRC layer keeps \texttt{target} violation at or below $0.06\%$, with at least
$99\%$ budget compliance. Detector quality mainly determines utility. Among deployable
scores, over-intervention falls from $84\%$ at $0.39$ ROC-AUC for
literal-removed provenance to $77\%$ at $0.74$ ROC-AUC for annotation-free
provenance. The counterfactual clean-vs-injected overlap reaches $0.76$ ROC-AUC
and $38\%$ over-intervention, but is reported only as an oracle-aided upper
bound. The attacker-literal overlap diagnostic reaches $0.93$ ROC-AUC but is
excluded because it reads the attacker literal used to define the violation
label.
\emph{(3) Stronger deployable detection.} Against a present-but-obfuscated
attacker, a character-$n$-gram provenance score recovers signal lost by exact
matching, raising ROC-AUC from $0.39$ to $0.70$. With the same calibration and
enforcement rule, per-field CRC holds the budget while reducing
over-intervention from $84\%$ to $64\%$. Thus, better non-verbatim detection
improves utility, while the calibration layer remains unchanged.

\begin{figure}[t]
  \centering
  \includegraphics[width=\columnwidth]{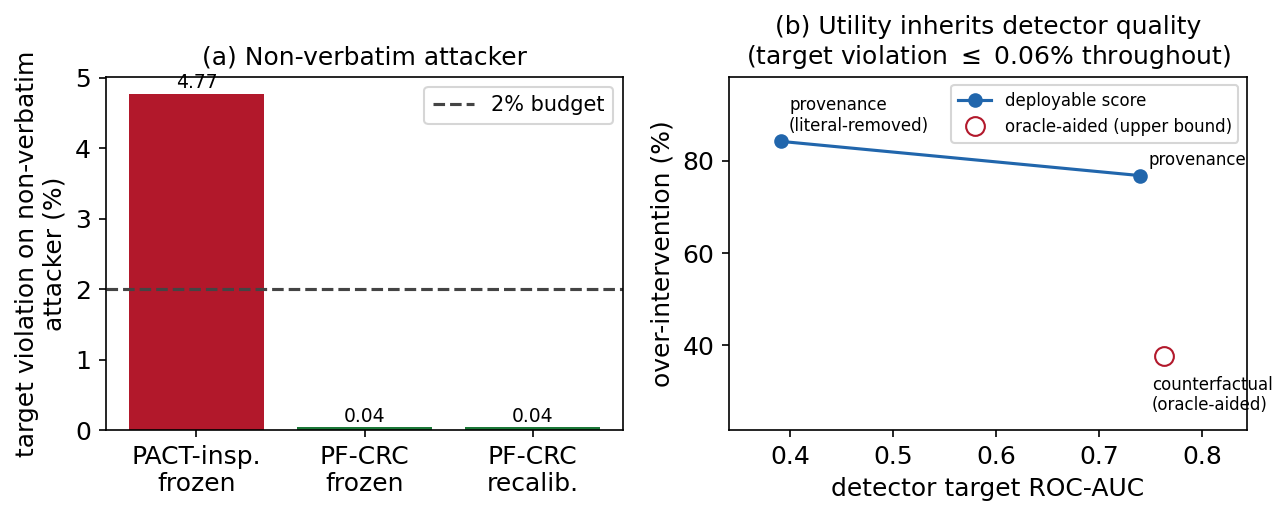}
  \caption{Non-verbatim attacks at a $2\%$ budget across six models and 20 seeds.
    The PACT-inspired frozen threshold exceeds budget, while per-field CRC holds.
    Over-intervention decreases as detector AUC improves.}
  \label{fig:nonverbatim}
\end{figure}

\paragraph{Field-level exchangeability.}
The calibration units are individual fields drawn from multiple episodes. The
guarantee therefore requires field-level exchangeability within each role
stratum, not episode-level exchangeability. This condition holds when fields of
the same role are drawn from i.i.d.\ episodes.

Our implemented calibration includes every emitted role-$r$ field, contributing
one calibration point per field rather than one per episode. Because each
per-role calibration set is built from role-$r$ fields alone, dependence between
fields of \emph{different} roles within the same episode does not affect any
single-role certificate. If an episode contributes several fields of the
\emph{same} role, however, those fields may be correlated, and counting all of
them can make $n_r$ overstate the effective sample size.

This distinction matters for the finite-sample certificate of
Theorem~2. Its Clopper-Pearson upper bound is \emph{exactly} binomial only when
the role-$r$ loss observations are independent Bernoulli draws; having
independent \emph{episodes} does not by itself make every individual field
independent when several same-role fields come from one episode. We therefore
state the assumption explicitly: the all-field certificates reported for the
main experiments hold under \emph{field-level independence} within each role
stratum. When multiple same-role fields per episode are correlated, an inflated
$n_r$ enters the denominator of the bound and can make the interval
anti-conservative, so the all-field numbers should be read as
field-independence-based rather than as episode-level exact certificates.

Two constructions remove this assumption by calibrating over independent
episodes: retaining one randomly selected role-$r$ field per episode, or using
the episode-level loss
$L_{e,r}(\tau)=\max_{i\in e:\, r_i=r} L_r(x_i;\tau)$,
which equals one when at least one role-$r$ field in episode $e$ is both
allowed and violated. Because each episode in our datasets contributes at most
one attacked (hence at most one \emph{violated}) field per role, the
certificate-relevant successes $k_r(\tau)$ are already at most one per episode;
the residual within-episode dependence therefore affects only the benign
denominator $n_r$. We recommend the episode-level construction whenever
deployments emit several same-role fields per episode, and we report the
one-per-episode sensitivity check alongside the all-field results in the code
archive.

\paragraph{Attack-conditional versus unconditional violation.}
\label{app:conditional}
The certified quantity is the unconditional role-specific violation rate
\[
V(r)=\Pr[\text{violated}\wedge\text{allowed}\mid\text{role}=r],
\]
whose denominator contains all role-$r$ fields, both benign and attacked. The
attack-conditional rate is
\[
V_{\mathrm{att}}(r)=\Pr[\text{violated}\wedge\text{allowed}\mid\text{role}=r,
\text{attacked}],
\]
By construction, an unattacked field cannot receive an attack-induced violation
label, so the clean-conditional rate $V_{\mathrm{clean}}(r)=0$ and the general
decomposition $V(r)=\pi(r)\,V_{\mathrm{att}}(r)+(1-\pi(r))\,V_{\mathrm{clean}}(r)$
reduces to
\[
V(r)=\pi(r)\,V_{\mathrm{att}}(r),
\]
where $\pi(r)$ is the field-level attack prevalence for role $r$.

Thus, at fixed $V_{\mathrm{att}}(r)$, the unconditional rate $V(r)$ decreases
linearly as attacks become rarer. We therefore report $V_{\mathrm{att}}(r)$
alongside $V(r)$, so the certificate is not interpreted as a
prevalence-invariant guarantee.

For the \texttt{target} role, the field stream is attack-heavy across all six
models, with $\pi(\texttt{target})=85\%$-$94\%$ and mean $91\%$
(Table~\ref{tab:conditional}). Accordingly, $V(\texttt{target})$ and
$V_{\mathrm{att}}(\texttt{target})$ are close. Per-field CRC keeps both at or
below $0.4\%$, whereas aggregate CRC reaches
$V_{\mathrm{att}}(\texttt{target})=5.5\%$.

In a low-prevalence deployment, aggregate control could satisfy an unconditional
budget while still allowing a large fraction of attacked fields. Calibrating only
on attacked role-$r$ fields instead gives a label-conditional conformal
certificate on $V_{\mathrm{att}}(r)$, which is invariant to test-time attack
prevalence $\pi(r)$.

\begin{table}[t]
\centering
\small
\setlength{\tabcolsep}{4pt}
\begin{tabular}{lccccc}
\toprule
 & & \multicolumn{2}{c}{Aggregate CRC} & \multicolumn{2}{c}{Per-field CRC} \\
\cmidrule(lr){3-4}\cmidrule(lr){5-6}
Model & $\pi$\,\% & $V$\,\% & $V_{\mathrm{att}}$\,\% & $V$\,\% & $V_{\mathrm{att}}$\,\% \\
\midrule
Gemini 2.5 Flash & 94.1 & 2.17 & 2.31 & 0.00 & 0.00 \\
Gemini 2.5 Pro   & 91.1 & 3.30 & 3.61 & 0.00 & 0.00 \\
GPT-4o-mini      & 93.2 & 3.44 & 3.70 & 0.00 & 0.00 \\
GPT-4o           & 92.4 & 5.02 & 5.45 & 0.00 & 0.00 \\
Llama 3.3 70B    & 89.6 & 4.24 & 4.71 & 0.33 & 0.38 \\
Qwen2.5-7B       & 84.9 & 1.11 & 1.30 & 0.00 & 0.00 \\
\bottomrule
\end{tabular}
\caption{Unconditional $V(\texttt{target})$ vs.\ attack-conditional
  $V_{\mathrm{att}}(\texttt{target})$ and field-level attack prevalence $\pi$
  ($\alpha=0.10$, 20 seeds). The attack-heavy field stream ($\pi=85$-$94\%$)
  makes $V\approx V_{\mathrm{att}}$, so per-field CRC's safety is not a
  rare-attack artifact. Reproduced by
  \texttt{experiments/attack\_conditional\_analysis.py}.}
\label{tab:conditional}
\end{table}

\section{The Aggregate-Budget Failure: Full Evidence}
\label{app:agg}

Figure~\ref{fig:diagnostic} shows \texttt{target}-role violation as a function of
the aggregate budget $\alpha$ on GPT-4o. Aggregate CRC satisfies the global
budget while \texttt{target} violation remains above its role-specific limit.
Per-field CRC instead controls the \texttt{target} role directly and remains at
zero observed \texttt{target} violation across the evaluated budgets.

Table~\ref{tab:panel} reports the corresponding six-model results using the
label-coupled diagnostic score, which serves as an oracle-aided upper bound on
detector quality. The same pattern holds across models: aggregate CRC can
satisfy its action-level objective while allowing excessive \texttt{target}-role
violations, whereas per-field CRC remains within the $1\%$ \texttt{target}
budget. Stronger single-threshold action losses (risk-weighted,
inverse-prevalence-weighted, and max-risk) are reported in the main paper; they
reduce dilution in some settings but do not remove the granularity mismatch.

\paragraph{InjecAgent replication.}
On InjecAgent with GPT-4o-mini under a $10\%$ aggregate budget, aggregate CRC
achieves $2.9\%$ overall field violation while allowing $100\%$ of attacked
\texttt{target} fields. Per-field CRC keeps \texttt{target} violation at $0\%$.
This secondary-benchmark pattern matches the AgentDojo dilution in
Figure~\ref{fig:diagnostic} and Table~\ref{tab:panel}.

\begin{figure}[t]
  \centering
  \includegraphics[width=\columnwidth]{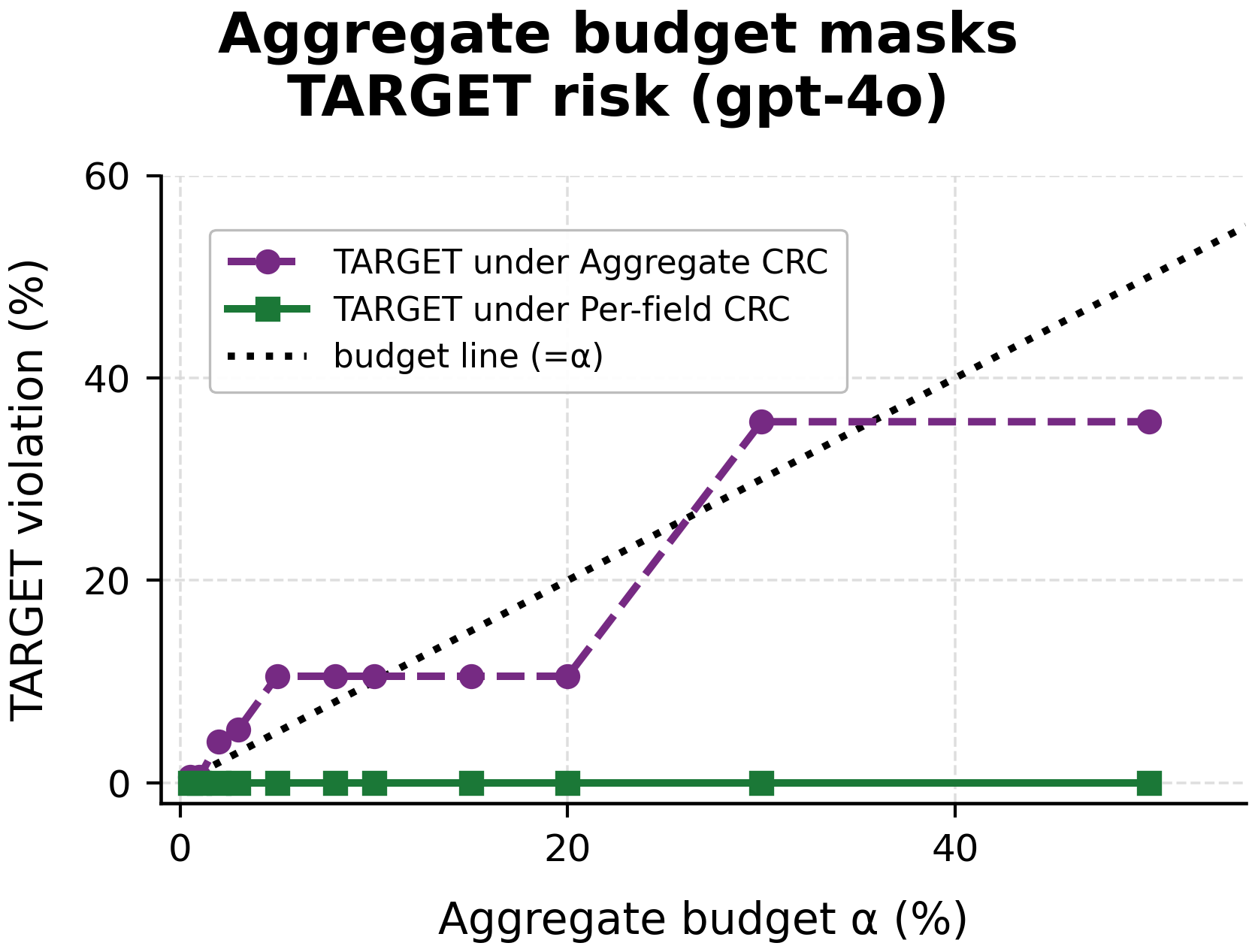}
  \caption{\texttt{target} violation vs.\ $\alpha$: aggregate CRC exceeds the role budget.
    Per-field CRC holds (GPT-4o, 20 seeds).}
  \label{fig:diagnostic}
\end{figure}

\begin{table*}[t]
\centering
\small
\setlength{\tabcolsep}{6pt}
\begin{tabular}{lcccccccc}
\toprule
& No def. & \multicolumn{4}{c}{\texttt{target} violation \%} & \multicolumn{3}{c}{Utility \%} \\
\cmidrule(lr){3-6}\cmidrule(lr){7-9}
Model & ASR \% & No def. & Agg.\ CRC & PACT$^\dagger$ & PF CRC & PACT$^\dagger$ & PF$_{\text{ab}}$ & PF$_{\text{rev}}$ \\
\midrule
Gemini 2.5 Flash & 49.1 & 43.8 & 2.0  & 1.3 & 0.0 & 25.7 & 7.7 & 27.9 \\
Gemini 2.5 Pro   & 23.5 & 26.3 & 4.0  & 0.0 & 0.0 & 34.7 & 14.5 & 27.2 \\
GPT-4o-mini      & 53.5 & 35.6 & 5.4  & 3.6 & 0.0 & 26.4 & 7.1 & 18.8 \\
GPT-4o           & 57.1 & 35.7 & 10.5 & 2.3 & 0.0 & 30.6 & 10.7 & 28.9 \\
Llama 3.3 70B    & 54.0 & 32.6 & 8.7  & 0.0 & 0.0 & 20.8 & 9.4 & 28.3 \\
Qwen2.5-7B       & 0.7  & 0.0  & 0.0  & 0.0 & 0.0 & 25.1 & 9.9 & 19.9 \\
\bottomrule
\end{tabular}
\caption{Per-model panel at a $1\%$ \texttt{target} budget using the
  label-coupled diagnostic score. This table is not a numerical expansion of
  Main Table~1 (channel ablation); it reports a separate per-model panel under
  the same score family. The two differ only in split protocol and reporting:
  Main Table~1 reports the per-field \texttt{target} violation \emph{averaged
  over $20$ random calibration/test splits}, so a model with occasional
  nonzero-violation splits (Llama~3.3~70B) shows a small positive mean
  ($0.0$-$0.3\%$), whereas this panel reports a \emph{single held-out split} and
  marks zero observed violations as $0.0\%$. Aggregate CRC exceeds budget; per-field CRC
  holds ($0\%$ = zero observed, Clopper-Pearson bound $\le 2.1\%$).
  $^\dagger$PACT-inspired fixed single-threshold proxy.
  No def.\ denotes the no-defense baseline; PF CRC is role-stratified per-field CRC;
  PF$_{\text{ab}}$/PF$_{\text{rev}}$ are its abstain/revert utilities.}
\label{tab:panel}
\end{table*}

\section{Certifiability and Deployable-Detector Results}
\label{app:cert}

Table~\ref{tab:certifiability} details certifiability for the \texttt{credential}
role. Across the $20$ group splits, the number of \texttt{credential}
calibration fields is small and varies substantially by model: from $2$-$6$
for Llama~3.3~70B to $8$-$22$ for GPT-4o, while some Qwen2.5-7B splits contain
no \texttt{credential} fields.

For each split, we compute the finite-sample floor $1/(n_r{+}1)$, then average
over the $20$ splits. The resulting per-model mean floors range from $6.7\%$ to
$63.6\%$. This range is across per-model averaged floors, not across individual
splits; the Qwen2.5-7B mean is increased by splits with $n_r=0$, whose floor is
$100\%$.

Because every per-model mean floor exceeds $1\%$, \texttt{credential} cannot be
certified individually at a $1\%$ budget. Pooling it with the other high-risk
roles produces a certifiable stratum, with floors of $0.14\%$-$0.49\%$ and
$0\%$ empirical \texttt{credential} violation. The certificate, however, applies
to the pooled high-risk average, not to \texttt{credential} alone. An individual
high-probability certificate would require approximately $300$
\texttt{credential} fields, compared with approximately $91$ available when
pooling across models.

Table~\ref{tab:deploy} compares the label-coupled diagnostic score with the
annotation-free provenance score. Although the deployable score has lower
ROC-AUC, $0.71$ versus $0.93$, it achieves similar safety: worst-case
\texttt{target} violation remains at or below $0.3\%$, with at least $95\%$
held-out compliance across all six models. This supports the main distinction
that conformal calibration determines risk control, while detector quality
primarily affects over-intervention and utility.

\begin{table}[t]
\centering
\small
\setlength{\tabcolsep}{3pt}
\begin{tabular}{lcccc}
\toprule
      & \multicolumn{2}{c}{\texttt{credential} alone} & \multicolumn{2}{c}{Pooled high-risk} \\
\cmidrule(lr){2-3}\cmidrule(lr){4-5}
Model & $n_{\text{cal}}$ range & Mean floor & Floor & Cert.@1\% (viol) \\
\midrule
Gemini 2.5 Flash & 3-18 & 10.9\% & 0.19\% & \cmark\ (0\%) \\
Gemini 2.5 Pro   & 2-14 & 15.1\% & 0.27\% & \cmark\ (0\%) \\
GPT-4o-mini      & 4-18 & 9.4\%  & 0.14\% & \cmark\ (0\%) \\
GPT-4o           & 8-22 & 6.7\%  & 0.20\% & \cmark\ (0\%) \\
Llama 3.3 70B    & 2-6  & 20.6\% & 0.27\% & \cmark\ (0\%) \\
Qwen2.5-7B       & 0-10 & 63.6\% & 0.49\% & \cmark\ (0\%) \\
\bottomrule
\end{tabular}
\caption{\texttt{credential} certifiability over $20$ group splits.
  $n_{\text{cal}}$ range is the actual integer count of \texttt{credential}
  calibration fields (minimum-maximum across the $20$ splits). Mean floor is
  the finite-sample floor $1/(n_r{+}1)$ computed on each split and then averaged
  over the $20$ splits. Alone it is $\gg 1\%$ for every model, so the
  $6.7\%$-$63.6\%$ reported range is a span of per-model averaged floors, not a
  span of individual splits. The pooled high-risk stratum is certifiable at
  $1\%$ (stratum average, not individual \texttt{credential}).}
\label{tab:certifiability}
\end{table}

\begin{table}[t]
\centering
\small
\setlength{\tabcolsep}{4pt}
\begin{tabular}{lcccccc}
\toprule
& \multicolumn{3}{c}{Label-coupled diag.} & \multicolumn{3}{c}{Deployable prov.} \\
\cmidrule(lr){2-4}\cmidrule(lr){5-7}
Model & viol.\ & o.-int.\ & compl.\ & viol.\ & o.-int.\ & compl.\ \\
\midrule
Gemini 2.5 Flash & 0.0 & 15.8 & 100 & 0.0 & 16.5 & 100 \\
Gemini 2.5 Pro   & 0.0 & 15.8 & 100 & 0.0 & 15.8 & 100 \\
GPT-4o-mini      & 0.0 & 17.0 & 100 & 0.0 & 16.3 & 100 \\
GPT-4o           & 0.0 & 25.0 & 100 & 0.0 & 26.1 & 100 \\
Llama 3.3 70B    & 0.3 & 15.8 & 95  & 0.0 & 15.6 & 100 \\
Qwen2.5-7B       & 0.0 & 8.8  & 100 & 0.2 & 8.4  & 95  \\
\bottomrule
\end{tabular}
\caption{Per-model deployable vs.\ label-coupled score ($2\%$ budget, 20 seeds):
  comparable safety despite lower ROC-AUC (0.71 vs.\ 0.93).
  Columns: \texttt{target} violation (\%), over-intervention (\%), and budget
  compliance (\%).}
\label{tab:deploy}
\end{table}

\section{Proofs}
\label{app:proofs}

Proposition~1 (concentration gap) is proved in the main paper. We give the remaining
proofs here.

\subsection{Proposition~2 (Aggregate control cannot certify a rare role)}
Recall that an \emph{aggregate-measurable} controller uses calibration data only
through the observables $(\Phi,L_{\mathrm{agg}})$ and does not observe per-role
violation labels.

\begin{proof}
Because the field-level prevalence in any finite benchmark is rational, write
$p_r=a/K$ with integers $1\le a\le K-1$; the special case $a=1$ recovers
$K=1/p_r$. Consider actions with $K$ fields, of which fields $1,\dots,a$ have
role~$r$ and fields $a{+}1,\dots,K$ have role \texttt{content}, so the field-level
prevalence of role~$r$ is exactly $p_r=a/K$. A fraction~$b$ of actions are bad and
the remainder are clean. The action score $S=\Phi(A)$ is drawn from the same
fixed distribution~$G$ for bad and clean actions, so the detector does not reveal
which field is violated.

Each bad action contains exactly one violated field, while each clean action
contains none. Under $P_1$, the violated field in every bad action is a role-$r$
field (say field~$1$). Under $P_0$, it is a \texttt{content} field (say
field~$K$). The two distributions have the same~$b$, the same score law~$G$, and
the same aggregate loss: every bad action that is allowed contributes
\[
L_{\mathrm{agg}}=\frac{1}{K}
\]
under both $P_0$ and $P_1$. Therefore, the joint distribution of
$(\Phi,L_{\mathrm{agg}})$ is identical under both distributions.

Because an aggregate-measurable controller~$\pi$ depends only on these
observables, its possibly randomized decision rule has the same distribution
under $P_0$ and $P_1$. Its admission decisions, benign utility, and aggregate
violation $\overline{V}(\pi)$ are therefore identical under both distributions.

Let
\[
q=\Pr[\text{allow}\mid\text{bad}]
\]
under~$\pi$. A bad allowed action contains exactly one violated-and-allowed field
among its~$K$ fields, so the aggregate rate is
\[
\overline{V}(\pi)=\frac{bq}{K}.
\]
Under $P_1$ that violated-and-allowed field is one of the $a$ role-$r$ fields per
action, and role-$r$ fields number $a$ per action, so
\[
V(r)=\frac{bq}{a}=\frac{K}{a}\,\overline{V}(\pi)=\frac{\overline{V}(\pi)}{p_r}.
\]
Under $P_0$, no role-$r$ field is ever violated, so $V(r)=0$.

Therefore, guaranteeing $V(r)\le\alpha$ under $P_1$ requires
\[
\overline{V}(\pi)\le\alpha\,p_r.
\]
Because the controller behaves identically under $P_0$, it incurs the same
utility cost there even though role~$r$ is never violated. By contrast,
role-stratified calibration observes the role labels and applies the standard
split-CRC guarantee directly to the $n_r$ calibration fields of role~$r$.
\end{proof}

\subsection{Theorem~1 (Certification-granularity frontier)}
\begin{proof}
\emph{(i) Achievability.} Fix a role~$r$ and restrict calibration to its $n_r$
fields together with one exchangeable test field. Because
\[
L_r(x;\tau)\in[0,1]
\]
is nondecreasing in~$\tau$, Equation~(5) of the main paper is the split
conformal risk control selector of~\citet{angelopoulos2024conformal}. Therefore,
\[
\mathbb{E}\bigl[L_r\bigl(X_{\mathrm{new}};\hat{\tau}(r)\bigr)\bigr]\le\alpha(r)
\]
whenever the feasible set is nonempty, equivalently when
\[
\alpha(r)\ge\frac{1}{n_r+1}.
\]
The selector uses only the field's role, score, and calibration violation label,
so it is measurable with respect to the role-observing channel
$\Phi_{\mathrm{role}}$. Applying the same construction separately to each role
certifies the full per-role budget vector.

\emph{(ii) Price of coarseness.} Proposition~2 constructs two distributions
$P_0$ and $P_1$ with the same joint law of $(\Phi,L_{\mathrm{agg}})$, but whose
role-$r$ violation rates differ by the factor $1/p_r$. An aggregate-measurable
controller cannot distinguish these distributions. Therefore, to certify
\[
V(r)\le\alpha
\]
under both, it must enforce
\[
\overline{V}\le\alpha\,p_r.
\]
By definition, the greatest benign utility available to any
$\Phi_{\mathrm{agg}}$-measurable controller satisfying this aggregate constraint
is
\[
u^\star(\alpha p_r).
\]
Thus, the factor~$p_r$ is the utility price of certifying a rare role through
aggregate observations.

\emph{(iii) Monotonicity.} Suppose $\Phi_1\preceq\Phi_2$, so that $\Phi_2$
refines $\Phi_1$. Every policy measurable with respect to $\Phi_1$ is also
measurable with respect to $\Phi_2$. Therefore, at any fixed utility level, the
set of certifiable budget vectors cannot shrink as the observation channel
becomes more informative:
\[
\Phi_{\mathrm{agg}}\preceq\Phi_{\mathrm{role}}\preceq\Phi_{\mathrm{field}}.
\]
This is Blackwell's comparison of experiments~\citep{blackwell1953equivalent}
applied to the risk-certification decision problem.
\end{proof}

\subsection{Proposition~3 (Conformal relaxation of noninterference)}
\begin{proof}
Fix an integrity-critical role~$r$. As $\alpha(r)\downarrow 0$, the constraint in
Equation~(5),
\[
\frac{1}{n_r+1}\Biggl(\sum_{i=1}^{n_r} L_r(x_i;\tau)+1\Biggr)\le\alpha(r),
\]
eventually admits no threshold~$\tau$ that allows any violated calibration field.
The selected threshold $\hat{\tau}(r)$ therefore falls below the smallest score
assigned to a violated field, so every field flagged by the detector is blocked.

In this zero-budget policy limit, the method recovers detector-relative
noninterference for role~$r$: untrusted influence cannot reach the
integrity-critical field whenever the detector identifies that influence. This
limit is conceptual, because finite-sample certification remains subject to the
floor
\[
\frac{1}{n_r+1}.
\]

For $\alpha(r)>0$, Equation~(5) instead selects the largest threshold whose
certified risk remains within $\alpha(r)$, thereby trading a controlled
residual-violation budget for utility. When
\[
\frac{1}{n_r+1}>\alpha(r),
\]
role~$r$ is not individually certifiable and must be pooled with other high-risk
roles or blocked, consistent with the zero-budget monitor.
\end{proof}

\subsection{Theorem~2 (Simultaneous stratum certificate)}
Let $\mathcal{R}_c=\{r:\alpha(r)<1\}$ and let
$g:\mathcal{R}_c\to\mathcal{G}$ map each controlled role to its final
calibration stratum ($g(r)=r$ when $r$ is individually certifiable; otherwise
$g(r)$ is the pooled high-risk stratum), with $\mathcal{G}=g(\mathcal{R}_c)$.
Recall that
\[
\hat{\tau}_g(\delta_g)=\sup\bigl\{\tau : U(k_g(\tau),n_g;\delta_g)\le\alpha(g)\bigr\},
\]
where $U(k,n;\delta_g)$ is the one-sided Clopper-Pearson upper confidence limit
and $k_g(\tau)$ is the number of stratum-$g$ calibration fields that are both
allowed and violated at threshold~$\tau$.

\begin{proof}
Fix a final stratum~$g\in\mathcal{G}$. Let
\[
\mathcal{T}_g=\{s_{(1)}\le\cdots\le s_{(n_g)}\}
\]
be the sorted unique calibration scores for that stratum, augmented with the
sentinels $s_{(0)}=-\infty$ and $s_{(n_g+1)}=+\infty$. Because $L_g(x;\tau)$ is
nondecreasing and right-continuous in~$\tau$, the empirical count $k_g(\tau)$
changes only at score values. Hence, $\hat{\tau}_g(\delta_g)$ is attained at a
candidate in $\mathcal{T}_g$.

Write $V(g)(\tau)=\Pr[\text{violated}\wedge\text{allowed}\mid\text{stratum}=g]$
for the population violation rate of stratum~$g$ at threshold~$\tau$. Because a
field is allowed exactly when its score satisfies $s\le\tau$, the map
$\tau\mapsto V(g)(\tau)$ is nondecreasing and right-continuous, the empirical
count $k_g(\tau)$ is nondecreasing in~$\tau$, and the retained candidate
$\hat{\tau}_g$ satisfies $U(k_g(\hat{\tau}_g),n_g;\delta_g)\le\alpha(g)$.

We do not treat any calibration-derived candidate as fixed; the candidate set
$\mathcal{T}_g$ is random because it depends on the calibration sample. Instead
we anchor the argument at the \emph{deterministic population boundary}
\[
\tau_g^{\star}=\inf\bigl\{\tau : V(g)(\tau)>\alpha(g)\bigr\},
\]
with $\tau_g^{\star}=+\infty$ if no such threshold exists. This quantity depends
only on the population law of the scores, not on the calibration sample, so
pointwise Clopper-Pearson validity applies at $\tau_g^{\star}$.

Suppose the selected threshold is unsafe, $V(g)(\hat{\tau}_g)>\alpha(g)$. Since
$V(g)$ is nondecreasing, the point $\hat{\tau}_g$ lies in
$\{\tau:V(g)(\tau)>\alpha(g)\}$, whose infimum is $\tau_g^{\star}$, so
$\hat{\tau}_g\ge\tau_g^{\star}$ and therefore $k_g(\hat{\tau}_g)\ge
k_g(\tau_g^{\star})$. Because $U(k,n;\delta_g)$ is nondecreasing in~$k$,
\[
U\bigl(k_g(\tau_g^{\star}),n_g;\delta_g\bigr)
\le U\bigl(k_g(\hat{\tau}_g),n_g;\delta_g\bigr)\le\alpha(g).
\]
Thus the unsafe event is contained in the event
$\{U(k_g(\tau_g^{\star}),n_g;\delta_g)\le\alpha(g)\}$, which is defined at the
fixed threshold $\tau_g^{\star}$.

If $V(g)(\tau_g^{\star})>\alpha(g)$, this event implies
$U(k_g(\tau_g^{\star}),n_g;\delta_g)<V(g)(\tau_g^{\star})$, a Clopper-Pearson
underestimate at a fixed threshold, whose probability is at most~$\delta_g$. If
right-continuity instead yields the boundary value
$V(g)(\tau_g^{\star})=\alpha(g)$, the unsafe event forces
$\hat{\tau}_g>\tau_g^{\star}$; applying the same containment at fixed thresholds
$t>\tau_g^{\star}$ with $V(g)(t)>\alpha(g)$ bounds each
$\{\hat{\tau}_g\ge t\}$ by~$\delta_g$, and taking the monotone limit
$t\downarrow\tau_g^{\star}$ preserves the bound. In either case,
\[
\Pr\bigl[V(g)(\hat{\tau}_g)>\alpha(g)\bigr]\le\delta_g.
\]
This argument uses only monotonicity and a fixed population boundary, so it
remains valid for discrete scores, ties, and flat regions of $V(g)(\tau)$.
Applying a union bound over the final strata $\mathcal{G}$ yields
\[
\Pr\bigl[\exists\, g\in\mathcal{G} : V(g)>\alpha(g)\bigr]
\le\sum_{g\in\mathcal{G}}\delta_g=\delta.
\]
Therefore, with probability at least $1-\delta$ over the calibration sample,
\[
V(g)\le\alpha(g)
\]
holds simultaneously for every final stratum $g\in\mathcal{G}$. Individually
retained roles ($g(r)=r$) receive role-specific certificates; pooled rare roles
receive only a pooled-stratum certificate.

If the confidence allocation $\{\delta_g\}$ is selected using an allocation set
$D_{\mathrm{alloc}}$ independent of the certification set $D_{\mathrm{cert}}$,
then conditional on $D_{\mathrm{alloc}}$ the allocation is fixed, and the same
argument applies to $D_{\mathrm{cert}}$.
\end{proof}

\subsection{Water-filling allocation (remark)}
When each stratum utility $u_g(\delta_g)$ is nondecreasing and concave, allocating
$\{\delta_g\}$ by equalizing marginal weighted utilities is a concave program
over the simplex $\sum_g\delta_g\le\delta$. The uniform Bonferroni split
$\delta_g=\delta/|\mathcal{G}|$ is always feasible. Empirically it matches
data-dependent water-filling under finite-sample utility estimates
(Supplement~E, Figure~\ref{fig:allocation}), so we use the uniform split as the
deployable default.

\subsection{Proposition~4 (Per-role degradation under shift)}
\begin{proof}
Fix a role~$r$ and threshold $\hat{\tau}(r)$ calibrated on the source
distribution~$P_r$. Because
\[
L_r(x;\tau)\in[0,1],
\]
the standard expectation-difference bound for bounded
functions~\citep{barber2023beyond} gives
\[
\bigl|\mathbb{E}_{Q_r}\bigl[L_r\bigl(X;\hat{\tau}(r)\bigr)\bigr]
-\mathbb{E}_{P_r}\bigl[L_r\bigl(X;\hat{\tau}(r)\bigr)\bigr]\bigr|
\le\mathrm{TV}(P_r,Q_r).
\]
Combining this with the source guarantee
\[
\mathbb{E}_{P_r}\bigl[L_r\bigl(X;\hat{\tau}(r)\bigr)\bigr]\le\alpha(r)
\]
yields
\[
\mathbb{E}_{Q_r}\bigl[L_r\bigl(X;\hat{\tau}(r)\bigr)\bigr]
\le\alpha(r)+\mathrm{TV}(P_r,Q_r).
\]
The same argument applies when $\hat{\tau}(r)$ is data-dependent by also taking
expectation over the calibration sample. Recalibrating on labeled samples from
$Q_r$ replaces $P_r$ with $Q_r$, removes the shift term, and restores a fresh
certificate under exchangeability with $Q_r$.

This bound is not itself a distribution-free certificate under shift unless
$\mathrm{TV}(P_r,Q_r)$ is known and small. Its role is to quantify how a frozen
per-role threshold can degrade and to motivate recalibration after distribution
shift.
\end{proof}

\section{Additional Stress Tests and Allocation}
\label{app:stress}

\paragraph{Detector noise.}
Figure~\ref{fig:degradation} varies the detector-noise level~$\sigma$. As noise
increases, the frozen PACT-inspired threshold drifts above the $2\%$
\texttt{target} budget. Per-field CRC instead recalibrates at each noise level
and achieves test-set budget compliance on $95\%$-$100\%$ of the sweep,
compared with $45\%$-$63\%$ for the PACT-inspired proxy.

Even when both methods are recalibrated at every noise level, per-field CRC
retains a $28.7$ percentage-point compliance advantage, with paired Wilcoxon
$p=4.8\times 10^{-8}$. This result shows that recalibration alone does not
remove the granularity mismatch: role stratification remains necessary for
reliable \texttt{target}-role control under detector noise.

\paragraph{Adaptive and unseen attacks.}
Figure~\ref{fig:adaptive} evaluates an adaptive attacker that selects the least
detectable successful injection and uses attack families unseen during
calibration. Under this setting, the frozen PACT-inspired threshold exceeds the
$2\%$ \texttt{target} budget, reaching $3.6\%$-$6.5\%$ \texttt{target}
violation, or up to $3.25\times$ the allowed limit. Per-field CRC remains at
$0.0\%$ \texttt{target} violation.

Attack novelty also increases evasion of the fixed threshold. Among violating
\texttt{target} fields, $10\%$ evade the threshold for the known attack family,
compared with $21\%$-$44\%$ for unseen attack families. These results show that
a fixed per-argument threshold is vulnerable to adaptive selection and
attack-family shift, whereas role-stratified per-field CRC continues to control
the \texttt{target} role.

\paragraph{Confidence-budget allocation.}
Figure~\ref{fig:allocation} compares uniform and data-dependent confidence
allocation. The allocation $\{\delta_g\}$ is selected using $D_{\mathrm{alloc}}$,
while thresholds and certificates are computed from the independent set
$D_{\mathrm{cert}}$. Utility is then evaluated on held-out test data.

The estimated stratum-utility curves are monotone and at least $99\%$ concave, so
the water-filling allocation applies. However, the uniform Bonferroni split
already achieves strong held-out utility. Data-dependent water-filling provides
no improvement, with pooled utility gaps of -3.1 percentage points at
$\delta{=}0.10$ and -1.8 percentage points at $\delta{=}0.05$.

These results support the uniform split as the deployable default: it preserves
the simultaneous certificate, avoids allocation overfitting, and performs
competitively under finite-sample utility estimates.

\paragraph{Role-label noise and calibration size.}
Under role-label noise, per-field CRC remains safer than the PACT-inspired
proxy. At noise rate $\eta{=}0.10$, \texttt{target} violation is $1.9\%$ for
per-field CRC versus $4.1\%$ for the proxy. At $\eta{=}0.30$, the corresponding
rates are $4.5\%$ and $9.7\%$. Although noisy role assignments weaken
stratification, per-field CRC continues to reduce \texttt{target} violation.

Calibration-size experiments confirm the finite-sample floor
\[
\frac{1}{n+1}.
\]
For example, when $n{=}5$, the smallest certifiable budget is $16.7\%$. This
forces rare roles such as \texttt{credential} to be pooled with other high-risk
roles rather than certified individually at a $1\%$ budget.

Using deployable value reversion, per-field CRC raises utility to
$19\%$-$29\%$ while maintaining $0\%$ observed \texttt{target} violation.

\begin{figure}[t]
  \centering
  \includegraphics[width=\columnwidth]{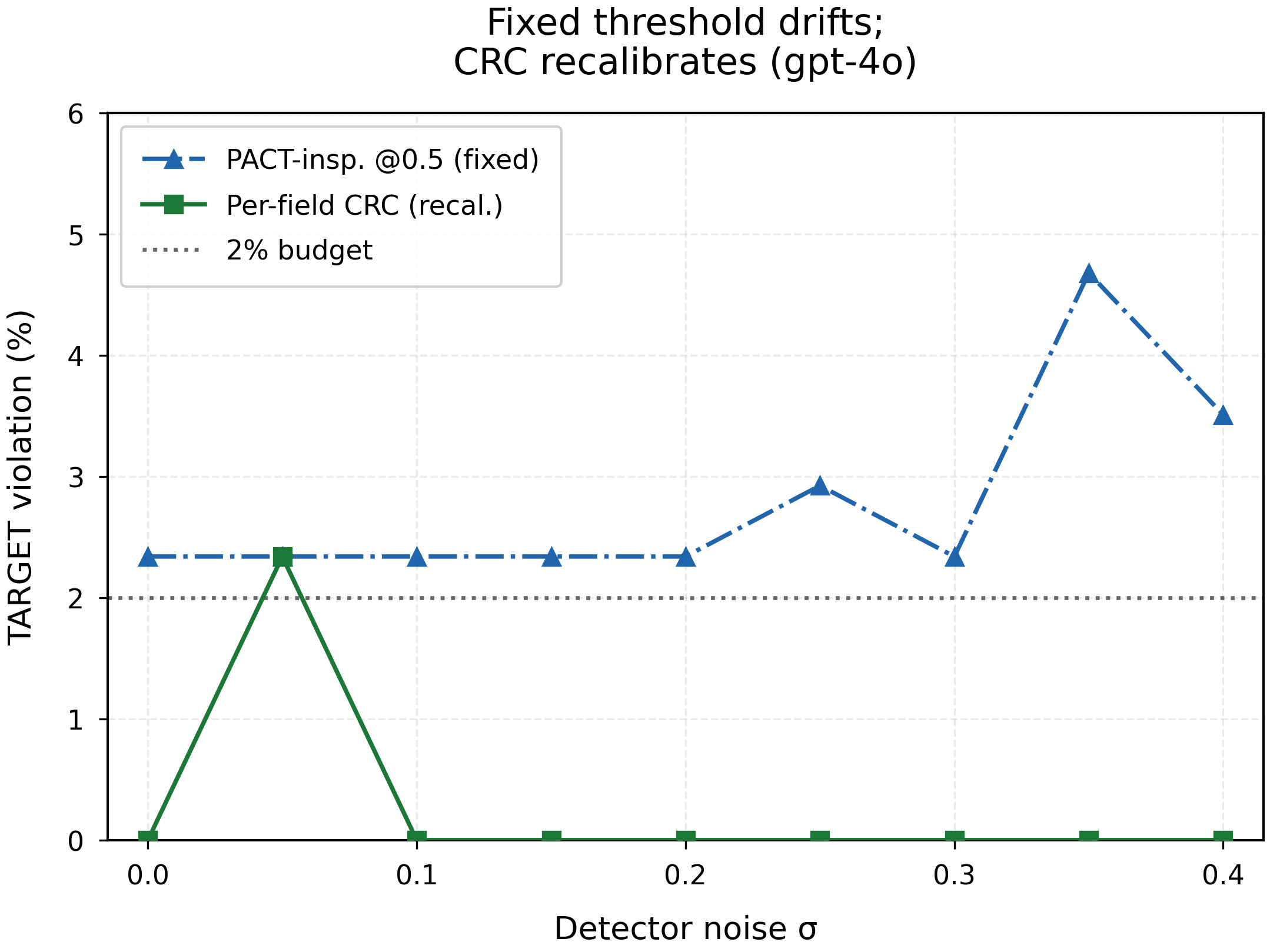}
  \caption{Detector noise: PACT-inspired fixed single-threshold drifts.
    Role-stratified per-field CRC recalibrates (GPT-4o).}
  \label{fig:degradation}
\end{figure}

\begin{figure}[t]
  \centering
  \includegraphics[width=\columnwidth]{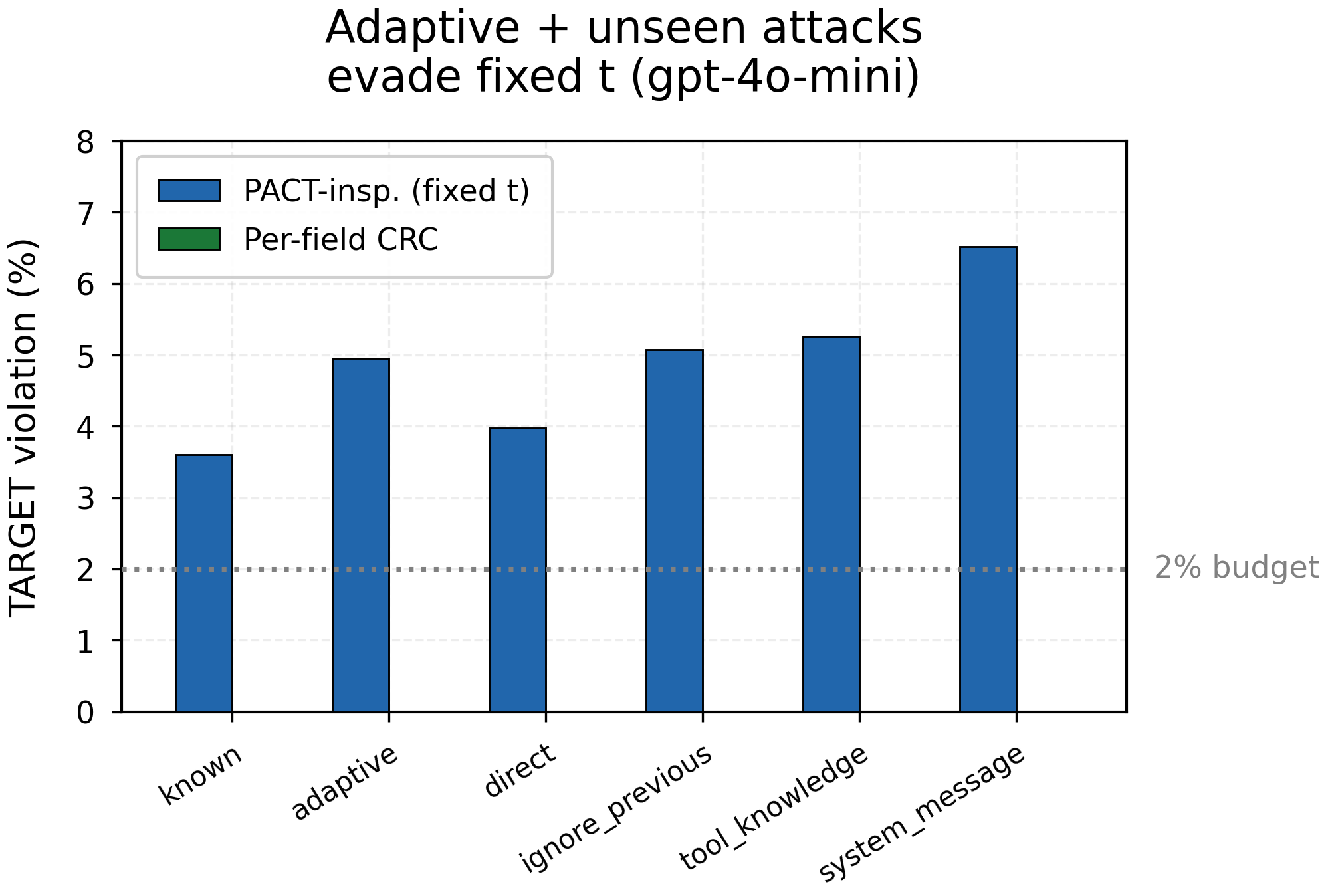}
  \caption{Adaptive/unseen attacks: fixed single threshold evaded.
    Role-stratified per-field CRC holds (GPT-4o-mini).}
  \label{fig:adaptive}
\end{figure}

\begin{figure}[t]
  \centering
  \includegraphics[width=\columnwidth]{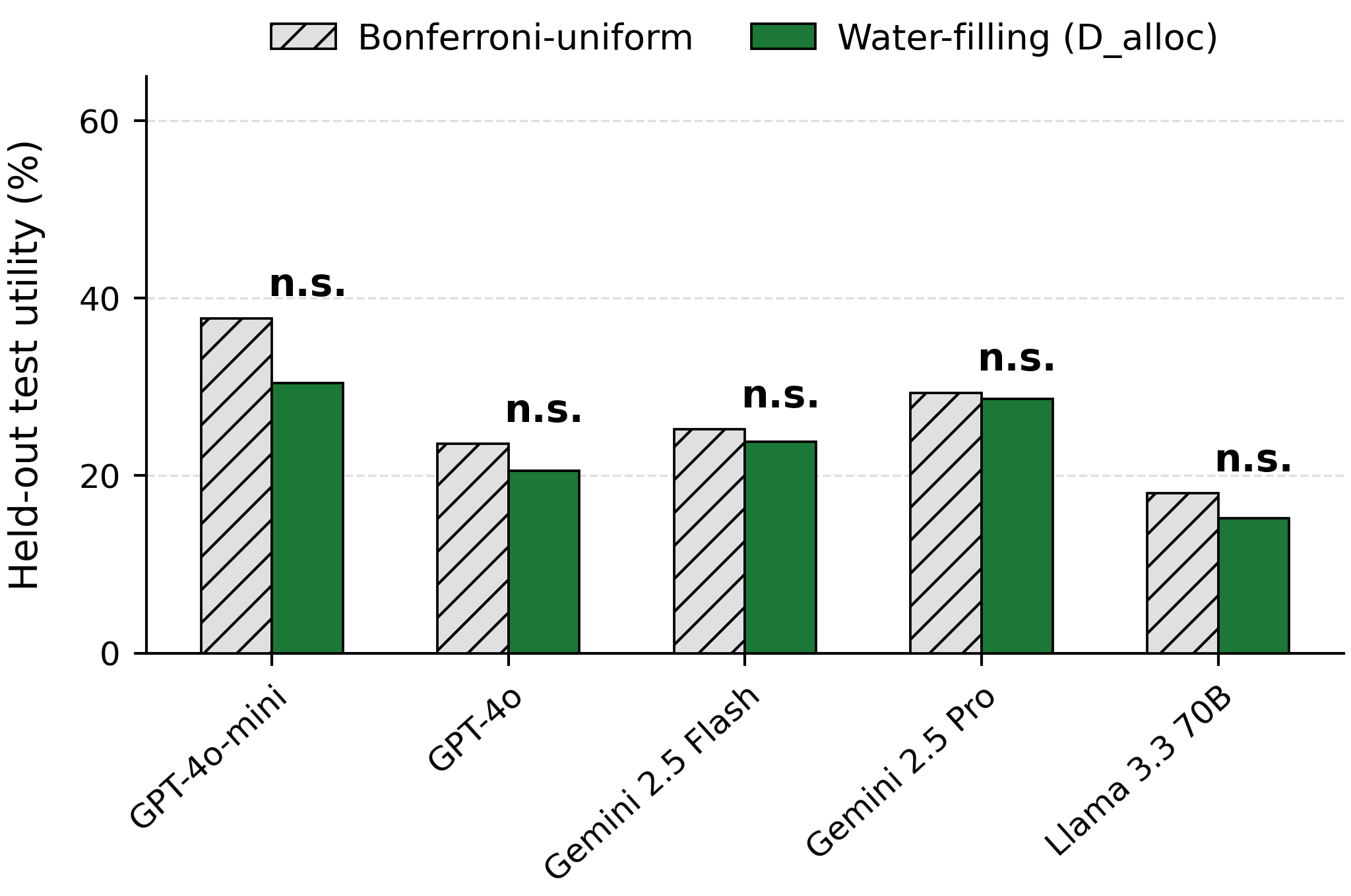}
  \caption{Uniform vs.\ data-dependent confidence-budget allocation under
    independent $D_{\mathrm{alloc}}/D_{\mathrm{cert}}$ ($\delta{=}0.10$, 20 seeds).
    Uniform is competitive on held-out utility.}
  \label{fig:allocation}
\end{figure}

\paragraph{Continual shift trajectory.}
Figure~\ref{fig:continual} evaluates deployment as a sequence of shifts rather
than a single test condition. We increase detector noise from $\sigma{=}0$ to
$0.40$ across nine stages, using four models, 20 seeds, and a $2\%$
\texttt{target} budget.

The frozen PACT-inspired threshold rises monotonically from $1.7\%$ to $5.7\%$
\texttt{target} violation and remains within budget at only $1$ of $9$ stages.
Fixed per-role control remains within budget throughout, with
$0.5\%$-$1.0\%$ violation across all nine stages. Per-field CRC with online
recalibration achieves the lowest and most stable worst-case violation,
remaining between $0.1\%$ and $0.5\%$ at every stage.

At high noise, online recalibration reduces violation to about half that of the
fixed per-role threshold. These results show that role stratification
provides robustness across the full shift trajectory, while conformal
recalibration further limits the accumulation of error as detector drift
increases.

\begin{figure}[t]
  \centering
  \includegraphics[width=\columnwidth]{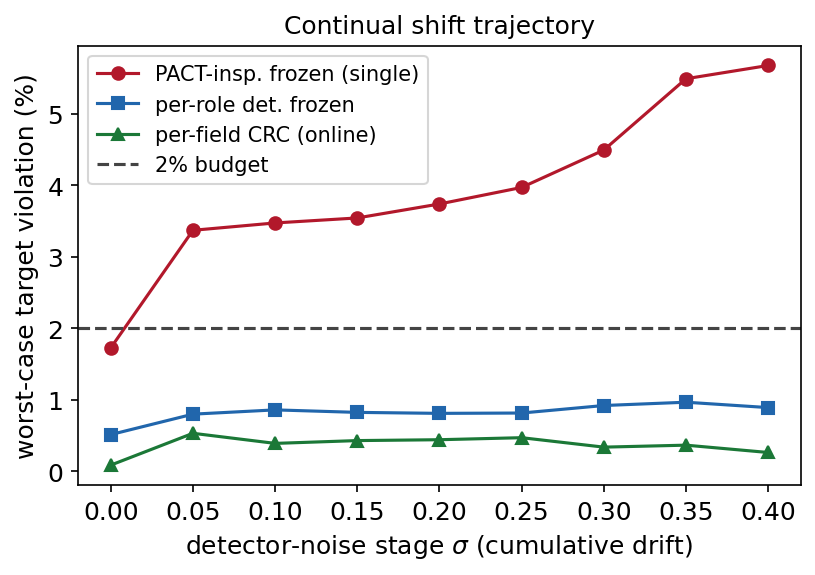}
  \caption{Detector-noise drift from $\sigma{=}0$ to $0.40$ at a $2\%$ budget
    across four models and 20 seeds. The frozen PACT-inspired threshold rises
    above budget, while recalibrated per-field CRC remains stable.}
  \label{fig:continual}
\end{figure}

\begin{figure}[t]
  \centering
  \includegraphics[width=\columnwidth]{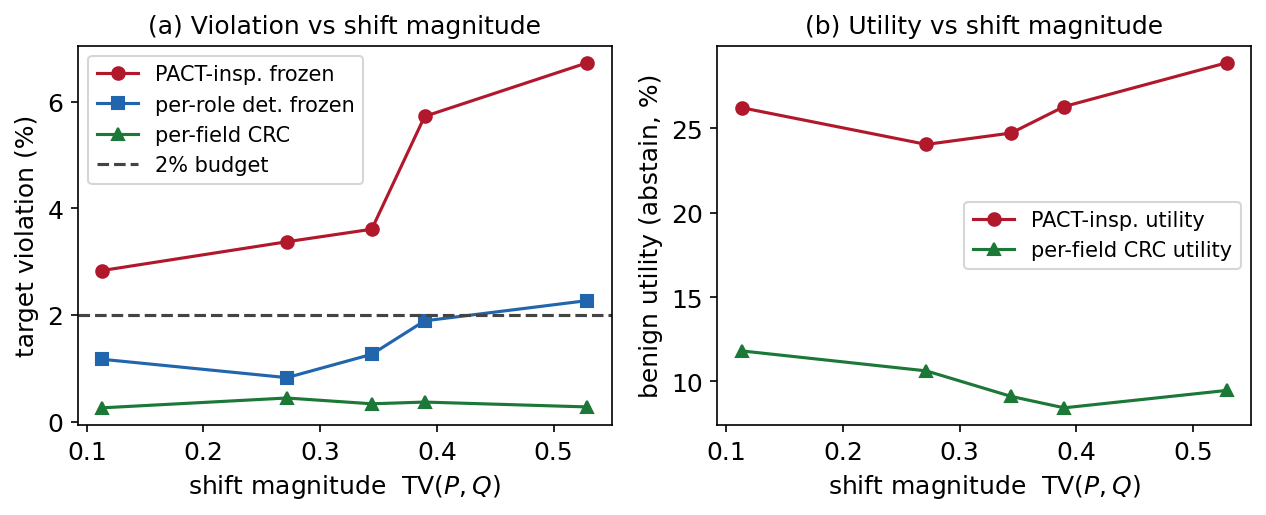}
  \caption{When to recalibrate vs.\ $\mathrm{TV}(P,Q)$ (four models, 20 seeds).
    (a)~Violation: frozen constants grow with TV. Per-field CRC flat.
    (b)~Utility cost (abstain).}
  \label{fig:crossover}
\end{figure}

\paragraph{Utility-safety crossover versus shift magnitude.}
Figure~\ref{fig:crossover} relates performance to the measured
\texttt{target}-role distribution shift $\mathrm{TV}(P,Q)$. As predicted by
Proposition~4, violation under frozen thresholds increases with shift
magnitude. The Pearson correlation between $\mathrm{TV}(P,Q)$ and
\texttt{target} violation is $0.19$ for the PACT-inspired proxy and $0.10$ for
the fixed per-role threshold.

The fixed per-role threshold crosses the $2\%$ \texttt{target} budget
once $\mathrm{TV}(P,Q)\gtrsim 0.41$. Per-field CRC with recalibration remains
nearly flat, with correlation approximately zero and \texttt{target} violation
near $0.3\%$ across the full noise ramp.

This crossover provides a practical recalibration rule: estimate
$\mathrm{TV}(P,Q)$ from re-scored traces and recalibrate when the measured shift
approaches the point at which the frozen threshold exceeds budget. This avoids
the utility cost of recalibration when the shift is small while restoring a
fresh certificate when the source threshold is no longer reliable.

\paragraph{Leave-one-suite-out transfer.}
Table~\ref{tab:loso} and Figure~\ref{fig:loso-supp} evaluate transfer to unseen
AgentDojo suites. We calibrate on three suites and test on the held-out fourth,
whose tools, argument names, and environment are disjoint from calibration.
This produces $24$ model-by-suite conditions at a $2\%$ \texttt{target} budget
over $20$ seeds.

Frozen role-stratified per-field CRC is compliant in all $24$ conditions, with
$1.5\%$ worst-case \texttt{target} violation. The frozen PACT-inspired fixed
single-threshold proxy is compliant in $14/24$ conditions, aggregate CRC in
$12/24$, and the fixed per-role threshold in $19/24$.

These results show that role stratification transfers more reliably than
single-threshold control to unseen tool suites. However, the source conformal
certificate does not extend to the held-out suite; the reported results are
empirical compliance under transfer, not a fresh certificate.

\begin{table}[t]
\centering
\small
\setlength{\tabcolsep}{4pt}
\begin{tabular}{@{}lccc@{}}
\toprule
Defense (frozen) & Compl.\ & Worst viol.\ & Over \\
\midrule
PACT-insp.\ (single) & 14/24 & 5.8 & 9.3 \\
Aggregate CRC        & 12/24 & 5.0 & 38.6 \\
Fixed per-role       & 19/24 & 3.8 & 33.0 \\
Per-field CRC        & 24/24 & 1.5 & 42.2 \\
\bottomrule
\end{tabular}
\caption{Leave-one-suite-out results across $24$ conditions at a $2\%$ budget.
  Compl.\ = test compliance; Worst viol.\ = worst \texttt{target} violation \%;
  Over = over-intervention \%. Role-stratified per-field CRC is compliant in all
  $24$ conditions. Source certificates do not extend to the unseen suite.
  PACT-insp.\ denotes the controlled fixed single-threshold proxy.}
\label{tab:loso}
\end{table}

\begin{figure}[t]
  \centering
  \includegraphics[width=\columnwidth]{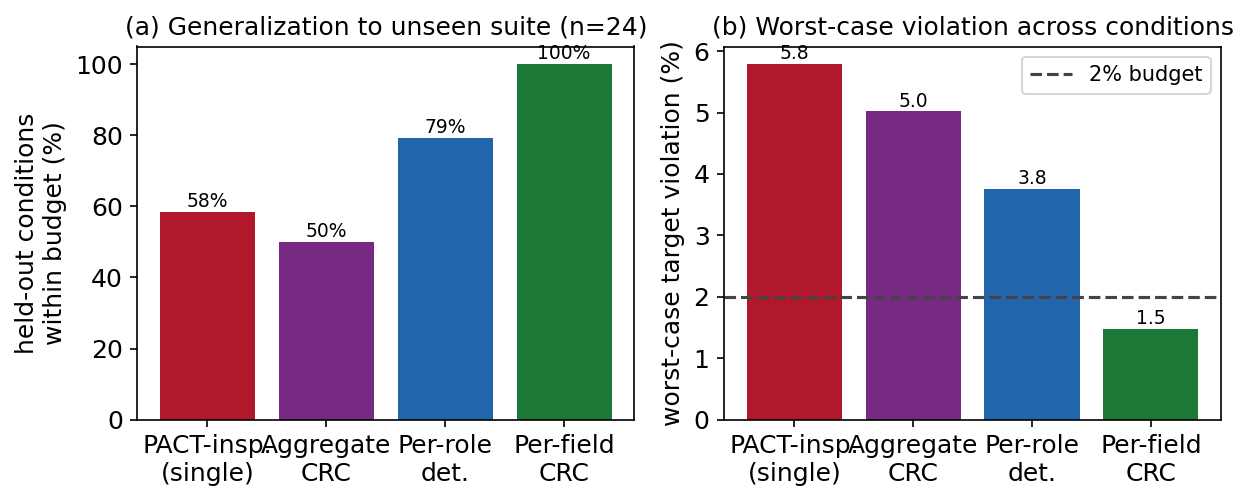}
  \caption{Leave-one-suite-out ($24$ conditions, $2\%$ budget): per-field CRC
    $24/24$. Frozen thresholds inconsistent.}
  \label{fig:loso-supp}
\end{figure}

\section{Extended Limitations and Reproducibility}
\label{app:repro}

Because we score interventions on recorded traces, abstain/revert utility does
not capture live replanning after a block. PACT-/FIDES-/CaMeL-inspired
comparisons isolate enforcement granularity under a shared detector and are not
full reproductions. The code archive provides splits, hyperparameters, seeds,
versions, raw JSON, and a script-to-experiment map; all reported experiments run
from cached traces without API access.

\paragraph{Computing infrastructure.}
Reproduction uses cached traces on CPU (Ubuntu~22.04.5, Python~3.10.18;
\texttt{requirements.txt}). Development host: Intel~i9-13900K, 62\,GiB RAM,
RTX~4090 (CUDA~12.2). Trace generation used hosted LLM APIs; regenerating
traces is optional and not required to reproduce the paper's statistics.

\section{Notations}
\label{app:notations}

\noindent Table~\ref{tab:notations} lists the main symbols used in the paper and
this supplement (role names such as \texttt{target} are omitted).

\begin{center}
\scriptsize
\setlength{\tabcolsep}{2.5pt}
\renewcommand{\arraystretch}{0.88}
\begin{tabular}{@{}lp{0.72\columnwidth}@{}}
\toprule
Symbol & Meaning \\
\midrule
\multicolumn{2}{@{}l}{\emph{Actions, roles, and scores}} \\
$A=(\textit{op},\,x_1,\dots,x_k)$ &
  Structured tool-call action \\
$x_i$ / $x$ &
  Named argument (field); enforcement unit \\
$r(i)$, $r$, $\mathcal{R}$ &
  Role of argument $i$; role set \\
$s_i$ / $s(x)$ &
  Nonconformity score (larger $\Rightarrow$ more untrusted influence) \\
$\tau$ &
  Allow threshold ($s_i\le\tau$) \\
$\mathrm{sim}$, $u$, $t$ &
  Similarity; untrusted context; trusted prompt \\
\midrule
\multicolumn{2}{@{}l}{\emph{Risks and budgets}} \\
$V(r)$ &
  $\Pr[\text{violated}\wedge\text{allowed}\mid\text{role}=r]$ \\
$V_{\mathrm{att}}(r)$, $V_{\mathrm{clean}}(r)$ &
  Attack-/clean-conditional rates ($V_{\mathrm{clean}}=0$) \\
$\pi(r)$ &
  Attack prevalence ($V=\pi V_{\mathrm{att}}$) \\
$\alpha(r)$, $\alpha$, $\alpha_{\mathrm{agg}}$ &
  Role-specific, generic, and aggregate risk budgets \\
$p_r$, $\overline{V}$ &
  Role prevalence; aggregate harm $\sum_r p_r V(r)$ \\
$p_{\mathrm{cred}\mid\mathrm{pool}}$ &
  \texttt{credential} prevalence in the pooled high-risk stratum \\
\midrule
\multicolumn{2}{@{}l}{\emph{Calibration and CRC}} \\
$L_r(x;\tau)$ &
  $\mathbf{1}\{s(x)\le\tau\wedge\text{$x$ is violated}\}$ \\
$L_{e,r}(\tau)$ &
  Episode-level loss $\max_{i\in e:\,r_i=r} L_r(x_i;\tau)$ \\
$n_r$, $1/(n_r{+}1)$ &
  Role-$r$ calibration size; certifiability floor \\
$\hat{\tau}(r)$ &
  Largest CRC threshold with inflated risk $\le\alpha(r)$ \\
$X_{\mathrm{new}}$, $y_i$ &
  Exchangeable test field; violation label \\
$g(r)$ &
  Final stratum ($r$ if certifiable, else pooled high-risk) \\
$L_{\mathrm{agg}}(A)$ &
  Action-average violation loss \\
\midrule
\multicolumn{2}{@{}l}{\emph{Simultaneous certificate}} \\
$\mathcal{G}$ &
  Final calibration strata $\mathcal{G}=g(\mathcal{R}_c)$ \\
$k_g(\tau)$ &
  Allowed-and-violated stratum-$g$ calibration count at $\tau$ \\
$U(k_g(\tau),n_g;\delta_g)$ &
  Exact one-sided Clopper-Pearson upper bound \\
$\hat{\tau}_g(\delta_g)$ &
  High-probability threshold with CP bound $\le\alpha(g)$ \\
$\delta_g$, $\delta$, $\mathcal{R}_c$ &
  Per-stratum / total failure prob.; controlled roles \\
$D_{\mathrm{alloc}}$, $D_{\mathrm{cert}}$ &
  Allocation and certification splits \\
$\tau_g^{\star}$ &
  $\inf\{\tau:V(g)(\tau)>\alpha(g)\}$ \\
\midrule
\multicolumn{2}{@{}l}{\emph{Observation channels and shift}} \\
$\Phi(A)$ &
  Score information visible for action $A$ \\
$\Phi_{\mathrm{agg}}$, $\Phi_{\mathrm{role}}$, $\Phi_{\mathrm{field}}$ &
  Aggregate, role, and field observation channels \\
$\Phi_1\preceq\Phi_2$, $u^\star(\beta)$ &
  Channel refinement; max utility under $\overline{V}\le\beta$ \\
$P_r$, $Q_r$, $\mathrm{TV}(P_r,Q_r)$ &
  Source/test role distributions; total variation \\
\bottomrule
\end{tabular}
\end{center}
\vspace{-0.5em}
\captionof{table}{Main notations used throughout the paper and this supplement.}
\label{tab:notations}

\vspace{-0.35em}

\bibliography{references}

@article{pact2026,
  title        = {The Granularity Mismatch in Agent Security: Argument-Level
                  Provenance Solves Enforcement and Isolates the {LLM}
                  Reasoning Bottleneck},
  author       = {Fan, Linfeng and Li, Ziwei and Tian, Yuan and Wang, Yichen
                  and Li, Rongsheng and Wang, Xiong},
  journal      = {arXiv preprint arXiv:2605.11039},
  year         = {2026},
  url          = {https://arxiv.org/abs/2605.11039}
}

@article{fides2025,
  title        = {Securing AI Agents with Information-Flow Control},
  author       = {Costa, Manuel and K{\"o}pf, Boris and Kolluri, Aashish and
                  Paverd, Andrew and Russinovich, Mark and Salem, Ahmed and
                  Tople, Shruti and Wutschitz, Lukas and
                  Zanella-B{\'e}guelin, Santiago},
  journal      = {arXiv preprint arXiv:2505.23643},
  year         = {2025},
  url          = {https://arxiv.org/abs/2505.23643}
}

@article{camel2025,
  title        = {Defeating Prompt Injections by Design},
  author       = {Debenedetti, Edoardo and Shumailov, Ilia and Fan, Tianqi and
                  Hayes, Jamie and Carlini, Nicholas and Fabian, Daniel and
                  Kern, Christoph and Shi, Chongyang and Terzis, Andreas and
                  Tram{\`e}r, Florian},
  journal      = {arXiv preprint arXiv:2503.18813},
  year         = {2025},
  url          = {https://arxiv.org/abs/2503.18813}
}

@article{arm2026,
  title        = {Causality Laundering: Denial-Feedback Leakage in
                  Tool-Calling {LLM} Agents},
  author       = {Chinaei, Mohammad Hossein},
  journal      = {arXiv preprint arXiv:2604.04035},
  year         = {2026},
  url          = {https://arxiv.org/abs/2604.04035}
}

@article{agentvisor2026,
  title        = {{AgentVisor}: Defending {LLM} Agents Against Prompt Injection
                  via Semantic Virtualization},
  author       = {Ying, Zonghao and Wang, Haozheng and Liu, Jiangfan and
                  Zou, Quanchen and Liu, Aishan and Yang, Jian and
                  Yang, Yaodong and Liu, Xianglong},
  journal      = {arXiv preprint arXiv:2604.24118},
  year         = {2026},
  url          = {https://arxiv.org/abs/2604.24118}
}

@article{ipisok2025,
  title        = {Taxonomy, Evaluation and Exploitation of {IPI}-Centric {LLM}
                  Agent Defense Frameworks},
  author       = {Ji, Zimo and Wang, Xunguang and Li, Zongjie and
                  Ma, Pingchuan and Gao, Yudong and Wu, Daoyuan and
                  Yan, Xincheng and Tian, Tian and Wang, Shuai},
  journal      = {arXiv preprint arXiv:2511.15203},
  year         = {2025},
  url          = {https://arxiv.org/abs/2511.15203}
}

@inproceedings{struq2025,
  title        = {{StruQ}: Defending Against Prompt Injection with Structured
                  Queries},
  author       = {Chen, Sizhe and Piet, Julien and Sitawarin, Chawin and
                  Wagner, David},
  booktitle    = {34th USENIX Security Symposium (USENIX Security 25)},
  pages        = {2383--2400},
  year         = {2025},
  url          = {https://www.usenix.org/conference/usenixsecurity25/presentation/chen-sizhe}
}

@inproceedings{secalign2025,
  title        = {{SecAlign}: Defending Against Prompt Injection with
                  Preference Optimization},
  author       = {Chen, Sizhe and Zharmagambetov, Arman and Mahloujifar,
                  Saeed and Chaudhuri, Kamalika and Wagner, David and
                  Guo, Chuan},
  booktitle    = {Proceedings of the 2025 ACM SIGSAC Conference on Computer
                  and Communications Security (CCS)},
  pages        = {2833-2847},
  year         = {2025},
  doi          = {10.1145/3719027.3744836},
  url          = {https://arxiv.org/abs/2410.05451}
}

@inproceedings{spotlighting2024,
  title        = {Defending Against Indirect Prompt Injection Attacks With
                  Spotlighting},
  author       = {Hines, Keegan and Lopez, Gary and Hall, Matthew and
                  Zarfati, Federico and Zunger, Yonatan and Kiciman, Emre},
  booktitle    = {Proceedings of the Conference on Applied Machine Learning
                  in Information Security (CAMLIS 2024)},
  series       = {CEUR Workshop Proceedings},
  volume       = {3920},
  pages        = {48--62},
  year         = {2024},
  publisher    = {CEUR-WS.org},
  url          = {https://ceur-ws.org/Vol-3920/paper03.pdf}
}

@article{instructionhierarchy2024,
  title        = {The Instruction Hierarchy: Training {LLMs} to Prioritize
                  Privileged Instructions},
  author       = {Wallace, Eric and Xiao, Kai and Leike, Reimar and
                  Weng, Lilian and Heidecke, Johannes and Beutel, Alex},
  journal      = {arXiv preprint arXiv:2404.13208},
  year         = {2024},
  url          = {https://arxiv.org/abs/2404.13208}
}

@inproceedings{jatmo2024,
  title        = {{Jatmo}: Prompt Injection Defense by Task-Specific
                  Finetuning},
  author       = {Piet, Julien and Alrashed, Maha and Sitawarin, Chawin and
                  Chen, Sizhe and Wei, Zeming and Sun, Elizabeth and
                  Alomair, Basel and Wagner, David A.},
  booktitle    = {Computer Security -- {ESORICS} 2024, Part {I}},
  series       = {Lecture Notes in Computer Science},
  volume       = {14982},
  pages        = {105--124},
  year         = {2024},
  publisher    = {Springer},
  doi          = {10.1007/978-3-031-70879-4_6}
}

@inproceedings{liu2024formalizing,
  title        = {Formalizing and Benchmarking Prompt Injection Attacks and
                  Defenses},
  author       = {Liu, Yupei and Jia, Yuqi and Geng, Runpeng and Jia, Jinyuan
                  and Gong, Neil Zhenqiang},
  booktitle    = {33rd USENIX Security Symposium (USENIX Security 24)},
  pages        = {1831-1847},
  year         = {2024},
  url          = {https://www.usenix.org/conference/usenixsecurity24/presentation/liu-yupei}
}

@article{cora2026,
  title        = {{CORA}: Conformal Risk-Controlled Agents for Safeguarded
                  Mobile {GUI} Automation},
  author       = {Feng, Yushi and Du, Junye and Wang, Qifan and Ma, Zizhan and
                  Niu, Qian and Matsuo, Yutaka and Feng, Long and Yu, Lequan},
  journal      = {arXiv preprint arXiv:2604.09155},
  year         = {2026},
  url          = {https://arxiv.org/abs/2604.09155}
}

@misc{conformalabstention2024,
  title         = {Mitigating {LLM} Hallucinations via Conformal Abstention},
  author        = {Abbasi-Yadkori, Yasin and Kuzborskij, Ilja and Stutz, David
                   and Gy{\"o}rgy, Andr{\'a}s and Fisch, Adam and
                   Doucet, Arnaud and Beloshapka, Iuliya and Weng, Wei-Hung and
                   Yang, Yao-Yuan and Szepesv{\'a}ri, Csaba and
                   Cemgil, Ali Taylan and Tomasev, Nenad},
  year          = {2024},
  eprint        = {2405.01563},
  archivePrefix = {arXiv}
}

@article{conformalactuator2025,
  title        = {Taming Variability: Randomized and Bootstrapped Conformal
                  Risk Control for {LLMs}},
  author       = {Pang, Lingyou and Huang, Lei and Lin, Jianyu and
                  Wang, Tianyu and Aue, Alexander and Priebe, Carey E.},
  journal      = {arXiv preprint arXiv:2509.23007},
  year         = {2025},
  url          = {https://arxiv.org/abs/2509.23007}
}

@inproceedings{angelopoulos2024conformal,
  title        = {Conformal Risk Control},
  author       = {Angelopoulos, Anastasios N. and Bates, Stephen and
                  Fisch, Adam and Lei, Lihua and Schuster, Tal},
  booktitle    = {International Conference on Learning Representations (ICLR)},
  year         = {2024},
  url          = {https://arxiv.org/abs/2208.02814}
}

@article{angelopoulos2023gentle,
  title        = {Conformal Prediction: A Gentle Introduction},
  author       = {Angelopoulos, Anastasios N. and Bates, Stephen},
  journal      = {Foundations and Trends in Machine Learning},
  volume       = {16},
  number       = {4},
  pages        = {494--591},
  year         = {2023},
  doi          = {10.1561/2200000101}
}

@article{angelopoulos2025ltt,
  title        = {Learn then Test: Calibrating Predictive Algorithms to
                  Achieve Risk Control},
  author       = {Angelopoulos, Anastasios N. and Bates, Stephen and
                  Cand{\`e}s, Emmanuel J. and Jordan, Michael I. and
                  Lei, Lihua},
  journal      = {The Annals of Applied Statistics},
  volume       = {19},
  number       = {2},
  pages        = {1641-1662},
  year         = {2025},
  doi          = {10.1214/24-AOAS1998},
  url          = {https://doi.org/10.1214/24-AOAS1998}
}

@article{bates2021rcps,
  title        = {Distribution-Free, Risk-Controlling Prediction Sets},
  author       = {Bates, Stephen and Angelopoulos, Anastasios and
                  Lei, Lihua and Malik, Jitendra and Jordan, Michael I.},
  journal      = {Journal of the ACM},
  volume       = {68},
  number       = {6},
  pages        = {43:1--43:34},
  year         = {2021},
  doi          = {10.1145/3478535},
  url          = {https://arxiv.org/abs/2101.02703}
}

@book{vovk2005algorithmic,
  title        = {Algorithmic Learning in a Random World},
  author       = {Vovk, Vladimir and Gammerman, Alexander and Shafer, Glenn},
  publisher    = {Springer},
  year         = {2005},
  url          = {https://link.springer.com/book/10.1007/978-0-387-25061-8}
}

@inproceedings{papadopoulos2002inductive,
  title        = {Inductive Confidence Machines for Regression},
  author       = {Papadopoulos, Harris and Proedrou, Kostas and
                  Vovk, Volodya and Gammerman, Alex},
  booktitle    = {Machine Learning: ECML 2002},
  series       = {Lecture Notes in Computer Science},
  volume       = {2430},
  pages        = {345--356},
  year         = {2002},
  publisher    = {Springer},
  doi          = {10.1007/3-540-36755-1_29}
}

@article{lei2018distribution,
  title        = {Distribution-Free Predictive Inference for Regression},
  author       = {Lei, Jing and G'Sell, Max and Rinaldo, Alessandro and
                  Tibshirani, Ryan J. and Wasserman, Larry},
  journal      = {Journal of the American Statistical Association},
  volume       = {113},
  number       = {523},
  pages        = {1094-1111},
  year         = {2018},
  doi          = {10.1080/01621459.2017.1307116},
  url          = {https://doi.org/10.1080/01621459.2017.1307116}
}

@techreport{vovk2003mondrian,
  title        = {Mondrian Confidence Machine},
  author       = {Vovk, Vladimir and Lindsay, David and Nouretdinov, Ilia and
                  Gammerman, Alex},
  institution  = {Royal Holloway, University of London},
  year         = {2003},
  note         = {On-line Compression Modelling Project, Working Paper \#4},
  url          = {https://alrw.cs.rhul.ac.uk/old/04.pdf}
}

@inproceedings{ding2023classconditional,
  title        = {Class-Conditional Conformal Prediction with Many Classes},
  author       = {Ding, Tiffany and Angelopoulos, Anastasios N. and
                  Bates, Stephen and Jordan, Michael I. and
                  Tibshirani, Ryan J.},
  booktitle    = {Advances in Neural Information Processing Systems
                  (NeurIPS)},
  year         = {2023},
  url          = {https://arxiv.org/abs/2306.09335}
}

@article{barber2023beyond,
  title        = {Conformal Prediction Beyond Exchangeability},
  author       = {Barber, Rina Foygel and Cand{\`e}s, Emmanuel J. and
                  Ramdas, Aaditya and Tibshirani, Ryan J.},
  journal      = {The Annals of Statistics},
  volume       = {51},
  number       = {2},
  pages        = {816-845},
  year         = {2023},
  doi          = {10.1214/23-AOS2276},
  url          = {https://doi.org/10.1214/23-AOS2276}
}

@inproceedings{bairaktari2025kandinsky,
  title        = {Kandinsky Conformal Prediction: Beyond Class- and
                  Covariate-Conditional Coverage},
  author       = {Bairaktari, Konstantina and Wu, Jiayun and Wu, Steven},
  booktitle    = {Proceedings of the 42nd International Conference on Machine
                  Learning},
  volume       = {267},
  series       = {Proceedings of Machine Learning Research},
  pages        = {2581--2602},
  year         = {2025},
  publisher    = {PMLR},
  url          = {https://proceedings.mlr.press/v267/bairaktari25a.html}
}

@inproceedings{debenedetti2024agentdojo,
  title        = {{AgentDojo}: A Dynamic Environment to Evaluate Prompt
                  Injection Attacks and Defenses for {LLM} Agents},
  author       = {Debenedetti, Edoardo and Zhang, Jie and Balunovi{\'c},
                  Mislav and Beurer-Kellner, Luca and Fischer, Marc and
                  Tram{\`e}r, Florian},
  booktitle    = {Advances in Neural Information Processing Systems
                  (NeurIPS) Datasets and Benchmarks Track},
  year         = {2024},
  url          = {https://arxiv.org/abs/2406.13352}
}

@inproceedings{zhan2024injecagent,
  title        = {{InjecAgent}: Benchmarking Indirect Prompt Injections in
                  Tool-Integrated Large Language Model Agents},
  author       = {Zhan, Qiusi and Liang, Zhixiang and Ying, Zifan and
                  Kang, Daniel},
  booktitle    = {Findings of the Association for Computational Linguistics:
                  {ACL} 2024},
  pages        = {10471--10506},
  year         = {2024},
  doi          = {10.18653/v1/2024.findings-acl.624},
  url          = {https://aclanthology.org/2024.findings-acl.624/}
}

@inproceedings{greshake2023notwhat,
  title        = {Not What You've Signed Up For: Compromising Real-World
                  {LLM}-Integrated Applications with Indirect Prompt
                  Injection},
  author       = {Greshake, Kai and Abdelnabi, Sahar and Mishra, Shailesh
                  and Endres, Christoph and Holz, Thorsten and
                  Fritz, Mario},
  booktitle    = {Proceedings of the 16th {ACM} Workshop on Artificial
                  Intelligence and Security ({AISec} '23)},
  pages        = {79--90},
  year         = {2023},
  doi          = {10.1145/3605764.3623985},
  url          = {https://doi.org/10.1145/3605764.3623985}
}

@article{zou2023universal,
  title        = {Universal and Transferable Adversarial Attacks on Aligned
                  Language Models},
  author       = {Zou, Andy and Wang, Zifan and Carlini, Nicholas and
                  Nasr, Milad and Kolter, J. Zico and Fredrikson, Matt},
  journal      = {arXiv preprint arXiv:2307.15043},
  year         = {2023},
  url          = {https://arxiv.org/abs/2307.15043}
}

@article{openai2024gpt4o,
  title        = {{GPT-4o} System Card},
  author       = {{OpenAI}},
  journal      = {arXiv preprint arXiv:2410.21276},
  year         = {2024},
  url          = {https://arxiv.org/abs/2410.21276}
}

@article{comanici2025gemini25,
  title        = {{Gemini 2.5}: Pushing the Frontier with Advanced Reasoning,
                  Multimodality, Long Context, and Next Generation Agentic
                  Capabilities},
  author       = {{Gemini Team}},
  journal      = {arXiv preprint arXiv:2507.06261},
  year         = {2025},
  url          = {https://arxiv.org/abs/2507.06261}
}

@article{grattafiori2024llama3,
  title        = {The {Llama 3} Herd of Models},
  author       = {{Llama Team}},
  journal      = {arXiv preprint arXiv:2407.21783},
  year         = {2024},
  url          = {https://arxiv.org/abs/2407.21783}
}

@article{qwen2024qwen25,
  title        = {{Qwen2.5} Technical Report},
  author       = {Yang, An and Yang, Baosong and Zhang, Beichen and Hui, Binyuan
                  and Zheng, Bo and Yu, Bowen and Li, Chengyuan and Liu,
                  Dayiheng and Huang, Fei and Wei, Haoran and Lin, Huan and
                  Yang, Jian and Tu, Jianhong and Zhang, Jianwei and Yang,
                  Jianxin and Yang, Jiaxi and Zhou, Jingren and Lin, Junyang
                  and Dang, Kai and Lu, Keming and Bao, Keqin and Yang, Kexin
                  and Yu, Le and Li, Mei and Xue, Mingfeng and Zhang, Pei and
                  Zhu, Qin and Men, Rui and Lin, Runji and Li, Tianhao and
                  Tang, Tianyi and Xia, Tingyu and Ren, Xingzhang and Ren,
                  Xuancheng and Fan, Yang and Su, Yang and Zhang, Yichang and
                  Wan, Yu and Liu, Yuqiong and Cui, Zeyu and Zhang, Zhenru and
                  Qiu, Zihan},
  journal      = {arXiv preprint arXiv:2412.15115},
  year         = {2024},
  url          = {https://arxiv.org/abs/2412.15115}
}

@article{denning1976lattice,
  title        = {A Lattice Model of Secure Information Flow},
  author       = {Denning, Dorothy E.},
  journal      = {Communications of the ACM},
  volume       = {19},
  number       = {5},
  pages        = {236-243},
  year         = {1976},
  doi          = {10.1145/360051.360056},
  url          = {https://doi.org/10.1145/360051.360056}
}

@article{volpano1996sound,
  title        = {A Sound Type System for Secure Flow Analysis},
  author       = {Volpano, Dennis and Irvine, Cynthia and Smith, Geoffrey},
  journal      = {Journal of Computer Security},
  volume       = {4},
  number       = {2--3},
  pages        = {167--187},
  year         = {1996},
  doi          = {10.3233/JCS-1996-42-304}
}

@inproceedings{smith2009qif,
  title        = {On the Foundations of Quantitative Information Flow},
  author       = {Smith, Geoffrey},
  booktitle    = {Foundations of Software Science and Computational Structures
                  (FoSSaCS)},
  series       = {Lecture Notes in Computer Science},
  volume       = {5504},
  pages        = {288--302},
  year         = {2009},
  publisher    = {Springer},
  doi          = {10.1007/978-3-642-00596-1_21}
}

@article{blackwell1953equivalent,
  title        = {Equivalent Comparisons of Experiments},
  author       = {Blackwell, David},
  journal      = {The Annals of Mathematical Statistics},
  volume       = {24},
  number       = {2},
  pages        = {265--272},
  year         = {1953},
  doi          = {10.1214/aoms/1177729032}
}

@inproceedings{podkopaev2021labelshift,
  title        = {Distribution-Free Uncertainty Quantification for
                  Classification Under Label Shift},
  author       = {Podkopaev, Aleksandr and Ramdas, Aaditya},
  booktitle    = {Proceedings of the Thirty-Seventh Conference on Uncertainty
                  in Artificial Intelligence},
  volume       = {161},
  pages        = {844--853},
  year         = {2021},
  publisher    = {PMLR},
  url          = {https://proceedings.mlr.press/v161/podkopaev21a.html}
}

@article{toolchaincrc2026,
  title        = {{ToolChain-CRC}: Conformal Risk Control for Agentic {AI}
                  Under Retrieval and Tool-Use Drift},
  author       = {Opoku, Jeffery and Banahene, David},
  journal      = {arXiv preprint arXiv:2606.18467},
  year         = {2026},
  url          = {https://arxiv.org/abs/2606.18467}
}

@article{hultberg2026anytime,
  title        = {Anytime-Valid Conformal Risk Control},
  author       = {Hultberg, Bj{\"o}rn and Zachariah, Dave and
                  Ribeiro, Ant{\^o}nio H.},
  journal      = {arXiv preprint arXiv:2602.04364},
  year         = {2026},
  url          = {https://arxiv.org/abs/2602.04364}
}

@article{khosravi2026selective,
  title        = {Conformal Selective Acting: Anytime-Valid Risk Control for
                  {RLVR}-Trained {LLMs}},
  author       = {Khosravi, Hassan and Huo, Xiaoming},
  journal      = {arXiv preprint arXiv:2605.20270},
  year         = {2026},
  url          = {https://arxiv.org/abs/2605.20270}
}
\end{document}